\documentclass{article}

    \PassOptionsToPackage{numbers, compress}{natbib}
 \usepackage[preprint]{neurips_2026}

\usepackage[utf8]{inputenc} 
\usepackage[T1]{fontenc}    
\usepackage{hyperref}       
\usepackage{url}            
\usepackage{booktabs}       
\usepackage{amsfonts}       
\usepackage{nicefrac}       
\usepackage{microtype}      
\usepackage{xspace}
\usepackage{amsmath,amssymb,amsthm}
\usepackage{mathtools} 
\usepackage{adjustbox}
\usepackage{titletoc}
\usepackage[dvipsnames]{xcolor}
\usepackage{tcolorbox}                
\newtheorem{theorem}{Theorem}
\newtheorem{proposition}{Proposition}
\newtheorem{lemma}{Lemma}

\theoremstyle{definition}

\makeatletter
\@ifpackageloaded{xspace}{
  \xspaceaddexceptions{"'}
}{}
\makeatother
\newcommand*{\method}{SymDrift\xspace}

\newcommand{\withgeom}[1]{}

\usepackage{algorithm}
\usepackage{algpseudocode}

\title{SymDrift: One-Shot Generative Modeling under Symmetries}

\author{%
  Samir Darouich\thanks{Equal contribution.}\\
  Institute of Artificial Intelligence\\
  Institute of Theoretical Chemistry\\
  University of Stuttgart\\
  \texttt{samir.darouich@ki.uni-stuttgart.de} \\
  \And
  Vinh Tong\footnotemark[1]\\
  Institute of Artificial Intelligence\\
  University of Stuttgart\\
  \texttt{vinh.tong@ki.uni-stuttgart.de} \\
  \And
  Lluís Pastor-Pérez \\
  Institute of Artificial Intelligence\\
  University of Stuttgart\\
  \texttt{lluis.pastor-perez@ki.uni-stuttgart.de} \\
  \And
  Tanja Bien \\
  Institute of Artificial Intelligence\\
  University of Stuttgart\\
  \texttt{tanja.bien@ki.uni-stuttgart.de} \\
  \And
  Loay Mualem\\
  Institute of Artificial Intelligence\\
  University of Stuttgart\\
  \texttt{loay.mualem@ki.uni-stuttgart.de} \\
  \And
  Mathias Niepert\\
  Institute of Artificial Intelligence\\
  University of Stuttgart\\
  \texttt{mathias.niepert@ki.uni-stuttgart.de} \\
}

\begin{document}

\maketitle

\begin{abstract}

Generative modeling of physical systems, such as molecules, requires learning distributions that are invariant under global symmetries, such as rotations in three-dimensional space. Equivariant diffusion and flow matching models can incorporate such invariances effectively, even when trained on a non-invariant empirical distribution, but they typically rely on costly multi-step sampling. Recently, drifting models have emerged as an efficient alternative, enabling single-step generation and achieving state-of-the-art performance in generative modeling tasks. However, we show that drifting models face a symmetry-specific challenge, since an equivariant generator does not generally produce the same drifting field as the one obtained from the symmetrized target distribution. Addressing this issue would require expensive symmetrization of the empirical distribution. To avoid this cost, we propose \method, a framework that makes the drifting field itself symmetry-aware. We introduce two complementary strategies: (i) a symmetrized drift in coordinate space based on optimal alignment, and (ii) a $G$-invariant embedding that removes symmetry ambiguity by construction. Empirically, \method outperforms existing one-shot methods on standard benchmarks for conformer and transition state generation, while remaining competitive with significantly more expensive multi-step approaches. By enabling one-shot inference, \method reduces computational overhead by up to 40$\times$ compared to existing baselines, making it promising for high-throughput applications such as virtual drug screening and large-scale reaction network exploration.

\end{abstract}
\section{Introduction}

Generative models~\cite{ho2020denoising,song2021scorebased, deng2026generative,lipman2023flow,liu2023flow,albergo2025stochastic} have emerged as powerful tools for learning complex distributions, with applications spanning images~\cite{karras2022elucidating,geng2025mean,zheng2026diffusion}, molecules~\cite{zeng2026propmolflow,song2023equivariant,klein2023equivariant,cao2025efficient, tong2024improving}, and proteins~\cite{abramson2024accurate,geffner2025proteina}. In many physical domains, the underlying data distribution exhibits inherent symmetries, for instance, molecular properties are invariant under rotations in $\mathbb{R}^3$ and permutations of identical atoms. Formally, this setting is captured by distributions $p$ that are invariant under the action of a symmetry group $G$, i.e., $p(\mathbf{x}) = p(g\mathbf{x})$ for all $g \in G$~\cite{kohler2020equivariant}. Learning such $G$-invariant distributions poses unique challenges, as multiple configurations correspond to the same physical state, leading to ambiguity in representation~\cite{hoogeboom2022equidiff}. 

Recent work addresses this challenge by incorporating symmetry into generative models through equivariant architectures and symmetry-aware transport~\cite{song2023equivariant,klein2023equivariant,cao2025efficient,hoogeboom2022equidiff,tong2025raoblackwell}. In flow-based models, the choice of transport path plays a central role. In particular, straight paths induced by optimal transport couplings improve efficiency by reducing trajectory curvature and the number of function evaluations~\cite{tong2024improving}. In symmetric domains, however, defining such paths requires accounting for equivalence under $G$. Distances between configurations can be misleading if symmetry is ignored, leading to suboptimal transport. Prior work addresses this by aligning samples before computing pairwise distances, enabling equivariant transport maps that are shorter and more stable~\cite{song2023equivariant,klein2023equivariant,hassan2024flow}.

Despite these advances, diffusion and flow-based models rely on multi-step sampling procedures, making them computationally expensive in high-throughput settings such as virtual screening. This has motivated the development of more efficient generative frameworks~\cite{liu2023flow,song2023consistency,tong2025learning}, ultimately leading to one-step models~\cite{geng2025mean, liu2023instaflow}. A common strategy is to distill multi-step dynamics into a single forward pass, for example via ReFlow~\cite{liu2023flow}. While effective, these approaches remain tied to a pre-defined generative trajectory and typically require multi-stage training, additional losses, and careful tuning~\cite{kim2025simple,zhu2024slimflow,lee2024improving}.

Recently, \citet{deng2026generative} introduced drifting models, a direct alternative to iterative and distillation-based approaches. Instead of learning and compressing a multi-step generative trajectory, drifting is, by design, a one-step model that formulates generation as a distribution-matching problem by learning a vector field whose stationary point matches the target distribution. However, drifting models implicitly assume access to a well-defined target distribution. In symmetric domains, this assumption breaks, as training is performed on a finite empirical distribution that is not invariant under $G$. 

\begin{figure}[t]
\centering
\includegraphics[width=\textwidth]{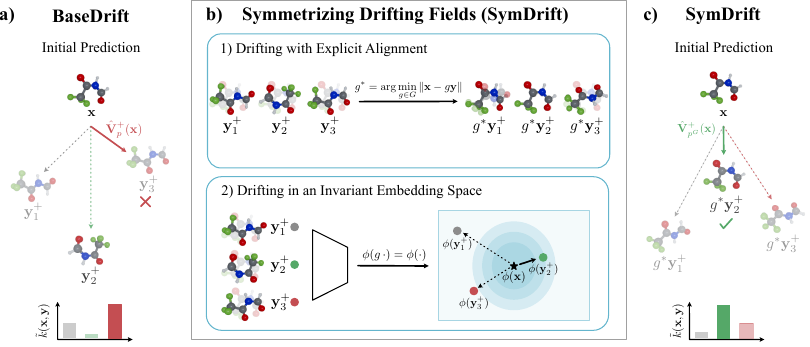}
\label{fig:method_overview}
\caption{
\textbf{Conceptual illustration of \method}.
\textbf{a} The base drift assigns highest kernel weights $\tilde{k}(\mathbf{x},\mathbf{y})$ to $\mathbf{y}^+_3$, despite $\mathbf{y}^+_2$ being the closest conformer under symmetry, leading to a biased drifting field $\hat{\mathbf{V}}_{p}^{+}(\mathbf{x}) \neq \mathbf{V}_{p^{G}}^{+}(\mathbf{x})$. 
\textbf{b} \method incorporates symmetries via 1) explicit alignment or 2) $G$-invariant embedding space.
\textbf{c} This leads to \method correctly identifying $\mathbf{y}^+_2$ in the symmetrized space, producing the drifting field $\hat{\mathbf{V}}_{p^{G}}^{+}(\mathbf{x}) \approx \mathbf{V}_{p^{G}}^{+}(\mathbf{x})$.
}
\end{figure}

To address this limitation, we propose \method, a symmetry-aware extension of drifting for generative modeling on highly symmetric distributions. Our key observation is that the learned drifting field is biased toward arbitrary representatives of each equivalence class, preventing recovery of the drift associated with the symmetrized target distribution. To resolve this, we explicitly account for symmetry during training and propose two complementary strategies: (i) aligning configurations in coordinate space to construct a symmetrized drift, and (ii) learning in an invariant representation that removes symmetry by construction. Our main contributions are summarized as follows:

\begin{itemize}
    \item We introduce \method, a framework for drifting-based one-shot generative modeling of highly symmetric distributions.
    \item We develop two complementary strategies to account for symmetry: (i) a symmetrized drift in coordinate space via alignment, and (ii) an invariant embedding that removes symmetry by construction.
    \item We achieve state-of-the-art one-shot performance on conformer and transition state generation benchmarks, while remaining competitive with iterative baselines and reducing computational cost by up to $40\times$.
\end{itemize}
\section{Related work}

\paragraph{Equivariance in generative modeling.}

In scientific domains, incorporating symmetry-aware inductive biases improves data efficiency and generalization. In particular, equivariant architectures ensure that generative models respect physical symmetries in 3D space. Equivariant diffusion models~\cite{kohler2020equivariant,hoogeboom2022equidiff} and subsequent flow matching approaches~\cite{song2023equivariant,klein2023equivariant,cao2025efficient} leverage this property to recover symmetry-consistent distributions, even when trained on finite, non-invariant data.

\paragraph{One-step generation.}

While equivariant diffusion and flow matching models provide strong symmetry guarantees, they rely on costly iterative sampling. This has motivated the development of one-step generative frameworks that eliminate inference-time iteration. Approaches such as Rectified Flow (ReFlow)~\cite{liu2023flow}, consistency models~\cite{song2023consistency}, and mean flow models~\cite{geng2025mean} approximate the probability flow with a direct mapping, often via distillation or trajectory consistency. Recently, \citet{deng2026generative} proposed drifting models, a single-step generative framework grounded in direct distribution matching. At its core, the approach minimizes the drifting field governed by dynamics that naturally align with the target data distribution at their equilibrium state. By ensuring the target distribution serves as the stationary point of this field, the model can map inputs directly to the desired data manifold in a single forward pass.

\paragraph{Molecular structure generation.}

We consider two central tasks in molecular modeling: conformer and transition-state (TS) generation. In conformer generation, the goal is to produce 3D molecular structures consistent with a given molecular graph. While traditionally addressed using rule-based methods such as RDKit~\cite{riniker2015better}, recent work has focused on data-driven approaches, including diffusion and flow-based models~\cite{luo2021predicting,shi2021learning,xu2022geodiff,jing2022torsional,liu2025nextmol}. More recently, one-shot methods such as AvgFlow~\cite{cao2025efficient} and EnFlow~\cite{xu2025energy} have been proposed to reduce inference cost. In TS generation, the goal is to predict the saddle-point geometry of a reaction from the molecular graphs of the reactants and products. Classical methods rely on iterative optimization or path-based techniques~\cite{jonsson1998nudged,henkelman_climbing_2000,banerjee_search_1985,henkelman_dimer_1999}, which become prohibitively expensive when combined with high-level electronic structure calculations. Recent machine learning approaches, including equivariant diffusion and flow-based models~\cite{pattanaik2020generating,duan2023accurate,yuan_analytical_2024,wander_cattsunami_2024,kim2024diffusion,galustian2025goflow,darouich2026adaptive,galustian2026motsart}, have improved efficiency, but still require iterative inference.
\section{Background}

Drifting~\cite{deng2026generative} is a one-shot generative model that learns a mapping from a prior distribution $p_\epsilon$ to a target distribution $p$. Specifically, a generator $f_\theta$ transforms samples $\boldsymbol{\epsilon} \sim p_\epsilon$ from the prior distribution into samples $\mathbf{x} = f_\theta(\boldsymbol{\epsilon})$, inducing an output distribution \(q_\theta = (f_\theta)_\# p_\epsilon\)
with $\#$ as the pushforward operator and $q_\theta \approx p$ as the objective. Training is formulated as a fixed-point iteration, where the generator in each training step $i$ is updated according to
\begin{equation}
    f_{\theta_{i+1}}(\boldsymbol\epsilon) \leftarrow f_{\theta_{i}}(\boldsymbol\epsilon) + \mathbf{V}_{p,q_{\theta_{i}}}(f_{\theta_{i}}(\boldsymbol\epsilon))
\end{equation}
and $\mathbf{V}_{p,q_{\theta_i}}$ denotes the drifting field governing the evolution of samples $\mathbf{x}$ during training. Convergence is achieved when the generated distribution matches the target distribution, i.e., $q_\theta = p$. At this point, the drifting field vanishes, $\mathbf{V}_{p,q_\theta}(\mathbf{x}) = \mathbf{0}$ for $\mathbf{x} \sim q_\theta$, implying that samples are stationary under the update and the fixed-point condition is satisfied. To convert this fixed-point update into a trainable loss, the stop-gradient (sg) operation is applied, which treats the target as a frozen reference:
\begin{equation}
    \label{eq:drifting_loss}
    \mathcal{L} 
    = \mathbb{E}_{\boldsymbol{\epsilon} \sim \mathcal{N}(0,I)} 
      \left[ \left\| f_\theta(\boldsymbol{\epsilon}) 
      - \operatorname{sg}\!\bigl(f_\theta(\boldsymbol{\epsilon}) 
      + \mathbf{V}_{p,q_{\theta}}(f_\theta(\boldsymbol{\epsilon}))\bigr) \right\|^{2} \right].
\end{equation}
This loss encourages the network to move its predictions toward the fixed-point defined by the drifting field, which is defined as a combination of attractive and repulsive components:
\begin{equation}
    \mathbf{V}_{p,q}(\mathbf{x}) = \mathbf{V}^{+}_{p}(\mathbf{x}) - \mathbf{V}^{-}_{q}(\mathbf{x}).
\end{equation}
Here, the positive term $\mathbf{V}^{+}_p$ draws samples toward the target distribution $p$, while the negative term $\mathbf{V}^{-}_q$ repels samples according to the current output distribution $q$, promoting diversity and mitigating mode collapse. The drift for a single sample $\mathbf{x}$ is computed as a kernel-weighted average over all points in the target and current distributions:
\begin{align}
    \mathbf{V}^{+}_{p}(\mathbf{x}) &= \frac{1}{Z_p} \mathbb{E}_{\mathbf{y^{+}} \sim p} \left[ k(\mathbf{x},\mathbf{y^{+}}) (\mathbf{y^{+}}-\mathbf{x}) \right] & \qquad
    \mathbf{V}^{-}_{q}(\mathbf{x}) &= \frac{1}{Z_q} \mathbb{E}_{\mathbf{y^{-}} \sim q
} \left[ k(\mathbf{x},\mathbf{y^{-}}) (\mathbf{y^{-}}-\mathbf{x}) \right]
\end{align}
where the normalization constants are $Z_p = \mathbb{E}_{\mathbf{y^{+}} \sim p}\left[k(\mathbf{x},\mathbf{y^{+}})\right]$ and $Z_q=\mathbb{E}_{\mathbf{y^{-}} \sim q} \left[k(\mathbf{x},\mathbf{y^{-}})\right]$. Following \citet{deng2026generative}, the kernel $k$ is chosen as the Laplacian kernel:
\begin{equation}
    k(\mathbf{x},\mathbf{y}) = \text{exp}\left( \frac{-\|\mathbf{x}-\mathbf{y}\|}{\tau} \right)
\end{equation}
where $|| \cdot ||$ is the $\ell_2$ distance and $\tau$ the temperature. The normalized kernel $\tilde{k}(\mathbf{x},\mathbf{y}) \coloneq \frac{1}{Z} k(\mathbf{x},\mathbf{y})$ can be efficiently obtained by applying a softmax over the logits $-\frac{1}{\tau}\|\mathbf{x}-\mathbf{y}\|$ across all samples $\mathbf{y}$. Alternatively, a two-sided normalization can be obtained by additionally applying the softmax over $\mathbf{x}$ and combining both normalizations via their geometric mean.

During mini-batch training, $N_\text{c}$ classes are sampled, and for each class, $N^{+}$ samples are drawn from the dataset and $N^{-}$ samples from the prior distribution. The prior samples are passed through the generator and serve both as the current outputs $\mathbf{x}$ and as negative samples $\mathbf{y}^{-}$. The drift for a single sample $\mathbf{x}_i$ is then computed explicitly as
\begin{equation}
    \label{eq:drift_explicit}
    \mathbf{V}_{p,q}(\mathbf{x}_i) = \sum_j^{N^{+}} \tilde{k}(\mathbf{x}_i,\mathbf{y}^{+}_j) (\mathbf{y}^{+}_j-\mathbf{x}_i) - \sum_k^{N^{-}}  \tilde{k}(\mathbf{x}_i,\mathbf{y}^{-}_k) (\mathbf{y}^{-}_k-\mathbf{x}_i).
\end{equation}
Unlike diffusion or flow matching approaches, this formulation explicitly considers all data and generated points for each sample, providing a direct mechanism for matching the distributions. The final training loss is then defined as the expectation of the drifting loss from Equation~\ref{eq:drifting_loss} over all generated samples. Inference is performed in a single forward pass by sampling from the prior and evaluating the generator, in contrast to iterative procedures required by diffusion or flow-based models.
In addition, \citet{deng2026generative} propose to compute the drift in an embedding space using a feature extractor $\phi$:
\begin{equation}
    \label{eq:drift_loss_embedded}
    \mathcal{L}_{\phi} = \mathbb{E}_{\boldsymbol{\epsilon}} \left[ \left\| \phi(f_{\theta}(\boldsymbol{\epsilon})) - \text{sg}\left(\phi(f_{\theta}(\boldsymbol{\epsilon})) + \mathbf{V}_{p,q_{\theta}}(\phi(f_{\theta}(\boldsymbol{\epsilon}) ))\right) \right\|^2 \right].
\end{equation}
This embedding plays a crucial role, as the drifting field relies on a kernel to measure similarities between samples. By mapping inputs to a feature space where semantically similar configurations are close, the embedding ensures that the resulting drift reflects meaningful structure in the data. Section~\ref{sec:extended_background} provides additional background, including a discussion of equivariance and invariance.
\section{Drifting with Equivariant Generators}

In this section, we introduce \method, a framework for training drift-based one-shot generative models in a symmetry-aware manner. We first discuss the challenges that arise when learning symmetric distributions with drift-based models, and then present our approach to address them.

\subsection{The Challenge of Symmetrized Drifting}\label{sec:equi_generative}

Target distributions arising from physical systems are often $G$-invariant. In molecular systems, these symmetries are described by a group $G$ that models rotations, translations, and permutations. In practice, however, we only observe a finite dataset $\mathcal{D}$, which induces an empirical distribution $p$. Since $\mathcal{D}$ is typically not closed under the group action, $p$ is generally not $G$-invariant. A standard way to encourage symmetry in the learned distribution is to augment the data with transformed samples. Ideally, this corresponds to training against the fully symmetrized empirical distribution $p^G$, while in practice, finite augmentation only approximates this group average. 

For diffusion and flow matching models, $G$-equivariant neural networks provide an alternative mechanism for incorporating symmetry, eliminating the need for explicit data augmentation ~\cite{song2023equivariant,klein2023equivariant,kohler2020equivariant,hoogeboom2022equidiff}. In this setting, optimizing the training objective under $p$ is mathematically equivalent to optimizing it under $p^G$~\cite{tong2025raoblackwell,chen2024equivariant}. For drifting models, however, we show that the resulting drifting fields obtained from $p$ and $p^G$ differ, even when using an equivariant architecture with the unsymmetrized empirical distribution.

To this end, we define the drifting objective and its gradient with respect to $\theta$:
\begin{align}
\mathcal{L}(\theta) &= \mathbb{E}_{\boldsymbol{\epsilon} \sim p_{\epsilon}} \left\| \mathbf{x} - \operatorname{sg}\!\bigl(\mathbf{x} + \mathbf{V}^+_p(\mathbf{x}) - \mathbf{V}^-_{q_\theta}(\mathbf{x})\bigr) \right\|^2, \\
\nabla_\theta \mathcal{L} &= \mathbb{E}_{\boldsymbol{\epsilon} \sim p_{\epsilon}} \left[ -2 \bigl(\nabla_\theta f_\theta(\boldsymbol{\epsilon})\bigr)^\top \bigl(\mathbf{V}^+_p(\mathbf{x}) - \mathbf{V}^-_{q_\theta}(\mathbf{x})\bigr) \right]
\end{align}
with $\mathbf{x} = f_\theta(\boldsymbol{\epsilon})$ as the model prediction. Because the prior distribution $p_\epsilon$ is $G$-invariant and $f_\theta$ is equivariant, such that $f_\theta(g\boldsymbol{\epsilon}) = g f_\theta(\boldsymbol{\epsilon})$, the induced model distribution $q_\theta = (f_\theta)_\# p_\epsilon$ is also $G$-invariant~\cite{kohler2020equivariant,hoogeboom2022equidiff}. This invariance naturally allows expectations with respect to $q_\theta$ to be equivalently expressed as averages over the group, which yields:
\begin{equation}
\label{eq:marginalized_grad}
    \nabla_\theta \mathcal{L} = \mathbb{E}_{\boldsymbol{\epsilon} \sim p_{\epsilon}} \mathbb{E}_{g \sim G} \left[ -2 \bigl(\nabla_\theta f_\theta(g\boldsymbol{\epsilon})\bigr)^\top \bigl(\mathbf{V}^+_p(g\mathbf{x}) - \mathbf{V}^-_{q_\theta}(g\mathbf{x})\bigr) \right].
\end{equation}
A direct consequence of the $G$-invariance of $q_\theta$ is that the resulting negative drifting vector field $\mathbf{V}^-{q_\theta}$ is $G$-equivariant, i.e., $\mathbf{V}^-{q_\theta}(g\mathbf{x}) = g\,\mathbf{V}^-{q_\theta}(\mathbf{x})$. We refer to Section~\ref{proof_property1} for a formal proof. Moreover, using the orthogonality of $G$, it follows that
$(\nabla_\theta f_\theta(g\boldsymbol{\epsilon}))^\top = (\nabla_\theta f_\theta(\boldsymbol{\epsilon}))^\top g^{-1}$.
Substituting these relations into the gradient yields:
\begin{equation}
\label{eq:conflict_grad}
    \nabla_\theta \mathcal{L}
    = \mathbb{E}_{\boldsymbol{\epsilon} \sim p_{\epsilon}} \Big[
    -2\, \bigl(\nabla_\theta f_\theta(\boldsymbol{\epsilon})\bigr)^\top
    \big(\mathbb{E}_{g \sim G} \bigl[g^{-1} \mathbf{V}^+_p(g\mathbf{x})\bigr]
    - \mathbf{V}^-_{q_\theta}(\mathbf{x})
    \big)
    \Big].
\end{equation}
We refer the reader to Section~\ref{sec:gradient_def} for a detailed derivation.
Crucially, the aggregated vector field $\hat{\mathbf{V}}_{p}^+(\mathbf{x}) = \mathbb{E}_g \bigl[g^{-1} \mathbf{V}^+_p(g\mathbf{x})\bigr]$ is generally not equal to $\mathbf{V}^+_{p^G}(\mathbf{x})$, the vector field induced by the symmetrized distribution:
\begin{equation}
    \label{eq:sym_drift_field}
    \mathbf{V}^+_{p^G}(\mathbf{x}) = \frac{\mathbb{E}_{\mathbf{y}^+ \sim p^G}[k(\mathbf{x}, \mathbf{y}^+)(\mathbf{y}^+ - \mathbf{x})]}{\mathbb{E}_{\mathbf{y}^+ \sim p^G} [k(\mathbf{x},\mathbf{y}^+)]} = \frac{\mathbb{E}_{\mathbf{y}^+ \sim p} \mathbb{E}_{g \sim G} [k(\mathbf{x}, g\mathbf{y}^+)(g\mathbf{y}^+ - \mathbf{x})]}{\mathbb{E}_{\mathbf{y}^+ \sim p}\mathbb{E}_{g \sim G} [k(\mathbf{x}, g\mathbf{y}^+)]}.
\end{equation}

This mismatch shows that, for drifting models, equivariance of the generator alone does not generally reproduce the drift induced by the symmetrized empirical distribution. We formalize this finding in Theorem~\ref{thm:sym_problem}. 

\begin{tcolorbox}[title=Drift field mismatch under symmetrization]
\begin{theorem}
\label{thm:sym_problem}
Let \(G\) be a compact orthogonal group on \(\mathbb{R}^d\) with Haar measure, and define
\[
\mathbf{V}_p^+(\mathbf{x})
=
\frac{\mathbb{E}_{\mathbf{y}^+ \sim p}\!\left[k(\mathbf{x},\mathbf{y}^+)(\mathbf{y}^+-\mathbf{x})\right]}
     {\mathbb{E}_{\mathbf{y}^+ \sim p}\!\left[k(\mathbf{x},\mathbf{y}^+)\right]},
\]
where 
\(
k(\mathbf{x},\mathbf{y}^+)=\exp\!\left(-\frac{\|\mathbf{x}-\mathbf{y}^+\|}{\tau}\right)
\)
is the Laplacian kernel.

Let \(p^G(\mathbf{x})=\mathbb{E}_{g\sim G}[p(g^{-1}\mathbf{x})]\) and
\(
\hat{\mathbf{V}}_{p}^+(\mathbf{x})=\mathbb{E}_{g\sim G}\!\left[g^{-1}\mathbf{V}_p^+(g\mathbf{x})\right].
\)
Then there exist a non-\(G\)-invariant \(p\) and \(\mathbf{x} \in \mathbb{R}^d\) such that
\(
\hat{\mathbf{V}}_{p}^+(\mathbf{x}) \neq \mathbf{V}_{p^G}^+(\mathbf{x}).
\)
\end{theorem}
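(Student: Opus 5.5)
Because the claim is existential, we prove it with one explicit counterexample. We take the smallest setting: $d=1$, $G=\{1,-1\}$ acting by multiplication (compact, orthogonal, Haar measure uniform), and the non-invariant point mass $p=\delta_{a}$ with $a>0$. Then $p^G=\tfrac12(\delta_a+\delta_{-a})$ is $G$-invariant and $p\neq p^G$. The evaluation point will be some $x\neq 0$, say $x=a/2$ or a general $x>0$, chosen to make the two fields easy to compare.

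First we compute $\hat{\mathbf{V}}_p^+$. Since $p$ is a single atom, the kernel normalization cancels, so $\mathbf{V}_p^+(z)=a-z$ for every $z$. Hence
\[
\hat{\mathbf{V}}_p^+(x)=\tfrac12\bigl[(a-x)+(-1)(a-(-x))\bigr]=\tfrac12\bigl[(a-x)-(a+x)\bigr]=-x .
\]
The positive-sample location $a$ drops out completely. The group-averaged field is just a pull toward the origin, the midpoint of the orbit.

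Next we compute $\mathbf{V}_{p^G}^+$ from Equation~\eqref{eq:sym_drift_field}. Write $k_\pm=\exp(-|x\mp a|/\tau)$. Then
\[
\mathbf{V}_{p^G}^+(x)=\frac{k_+(a-x)+k_-(-a-x)}{k_++k_-}=-x+a\,\frac{k_+-k_-}{k_++k_-}.
\]
For $x>0$ we have $|x-a|<|x+a|$, so $k_+>k_-$. With $a>0$ the correction term is therefore strictly positive, and $\mathbf{V}_{p^G}^+(x)\neq -x=\hat{\mathbf{V}}_p^+(x)$. Concretely, for $0<x\le a$ the correction equals $a\tanh(x/\tau)$. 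This shows the symmetrized drift selects the nearer orbit element, while the averaged drift does not.

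No step is a real obstacle; the main care is checking the averaging convention. The definition $\hat{\mathbf{V}}_p^+(x)=\mathbb{E}_g[g^{-1}\mathbf{V}_p^+(gx)]$ with $g^{-1}=g$ for $g=\pm1$ gives exactly the computation above. The key conceptual point to state is where the two sides differ. The mismatch arises because $\hat{\mathbf{V}}$ averages normalized drifts across group elements, whereas $\mathbf{V}_{p^G}$ normalizes after averaging the kernel weights: a ratio of averages versus an average of ratios. If we prefer a higher-dimensional example matching the molecular setting, we would embed the same construction in $\mathbb{R}^d$ with $G=\{I,-I\}\subset O(d)$, where the identical computation goes through along the line spanned by $a$.
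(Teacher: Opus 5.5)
Your counterexample is correct, and it uses the same mechanism as the paper's proof. For a single-atom $p$ the kernel normalization cancels, so $\hat{\mathbf{V}}_p^+(\mathbf{x})$ is the Haar mean of the orbit minus $\mathbf{x}$ (here $-\mathbf{x}$). By contrast, $\mathbf{V}_{p^G}^+(\mathbf{x})$ is a kernel-weighted orbit mean tilted toward the nearer orbit point: an average of ratios versus a ratio of averages.

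The execution differs, mostly in your favor. The paper takes $G=\mathrm{SO}(3)$ on $\mathbb{R}^{N\times 3}$ and gets $-\mathbf{x}$ from $\mathbb{E}_g[g\mathbf{y}^+]=\mathbf{0}$. It then evaluates $\mathbf{V}_{p^G}^+$ only in the limit $\tau\to 0$, assuming a unique minimizer $g^*$ and observing that $g^*\mathbf{y}^+-\mathbf{x}\neq-\mathbf{x}$. Strictly, that gives the mismatch only for sufficiently small $\tau$, or at a given $\tau$ after rescaling $\mathbf{x},\mathbf{y}^+$, which the paper does not spell out. Your two-element group yields the exact value $-x+a\tanh(x/\tau)$ for $0<x\le a$. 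So the mismatch holds at every fixed $\tau>0$, with no concentration argument and no uniqueness assumption. What the paper's example buys is context: it lives in the molecular symmetry group and displays the hard-alignment limit that motivates the alignment-based drift of Method~1.

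Both arguments fix one particular $G$, although the statement opens with ``Let $G$ be\ldots''. Your idea extends directly to every nontrivial compact $G\subset O(d)$. Take $p=\delta_{\mathbf{a}}$ with $\mathbf{a}$ not $G$-fixed, and $\mathbf{x}=s\mathbf{a}$ with $s>0$. Since $\|s\mathbf{a}-g\mathbf{a}\|^2=(1+s^2)\|\mathbf{a}\|^2-2s\,f(g)$ with $f(g)=\langle g\mathbf{a},\mathbf{a}\rangle$, the weight $k(\mathbf{x},g\mathbf{a})$ is a strictly increasing function of $f(g)$. The function $f$ is continuous, equals $\|\mathbf{a}\|^2$ at the identity, and is smaller wherever $g\mathbf{a}\neq\mathbf{a}$, so it is not Haar-a.s.\ constant. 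Chebyshev's covariance inequality then gives $\langle\mathbf{V}_{p^G}^+(\mathbf{x})+\mathbf{x},\mathbf{a}\rangle>\mathbb{E}_g[f(g)]=\langle\hat{\mathbf{V}}_p^+(\mathbf{x})+\mathbf{x},\mathbf{a}\rangle$.
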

\end{tcolorbox}

We provide a full proof in Section \ref{appendix: main_thm}. Consequently, when using equivariant generators in drift-based models, both the drifting field and the loss must be defined with respect to the symmetrized target distribution $p^G$. This, in turn, makes the estimation of expectations in Equation~\ref{eq:sym_drift_field} computationally demanding, as it requires averaging over a large number of group-augmented samples, even when the architecture itself is equivariant. As a result, the sample-efficiency gains typically associated with equivariant networks are largely diminished, weakening the original motivation for their use in this setting and motivating the consideration of more flexible, non-equivariant architectures. Crucially, this issue is drastically increased in the low-temperature regime, which is required in drift-based models in order to generate sharp distributions~\cite{deng2026generative}. As the kernel becomes increasingly peaked, however, most transformed samples $g\mathbf{y}$ fall outside its effective support, contributing negligibly to the expectation due to the exponential decay of the Laplacian kernel. In this regime, the group integral effectively behaves as a soft minimum over transformations. As $\tau \to 0$, the probability mass concentrates on those $g \in G$ that map $g\mathbf{y}$ closest to $\mathbf{x}$, causing the smooth expectation over $G$ to collapse into a hard alignment. In the limit, only the transformation that best aligns $\mathbf{x}$ and $\mathbf{y}$ contributes (see Section~\ref{hard_alignment_lim} for a formal proof).

\subsection{Method 1: Drifting with Explicit Alignment}
\label{sec:method_alignment}

Motivated by this limiting behavior, we can sidestep the prohibitive computational cost of explicit data augmentation by designing a symmetry-aware drifting field. Concretely, for each generated sample $\mathbf{x}$ and target sample $\mathbf{y}$, we first remove the center of mass and then find the optimal group transformation $g^*(\mathbf{x}, \mathbf{y})$ that minimizes their distance:
\begin{equation}
    \label{eq:sym_aware_diff}
    g^*(\mathbf{x}, \mathbf{y}) = \arg\min_{g \in G} \left\| \mathbf{x} - g\mathbf{y} \right\|.
\end{equation}
We then construct our revised drifting field by replacing the standard pairs with their optimally aligned counterparts, substituting $(\mathbf{x}_i, g^*(\mathbf{x}_i, \mathbf{y}_j)\mathbf{y}_j)$ into Equation~\ref{eq:drift_explicit}. This explicit alignment effectively circumvents the need to average over uniformly sampled group elements. By forcing the target to adopt the optimal orientation relative to the generated sample, it provides a practical approximation of the symmetrized drifting field $\hat{\mathbf{V}}_{p^{G}}^{+}(\mathbf{x}) \approx \mathbf{V}^+_{p^G}(\mathbf{x})$ in the low-temperature regime. Importantly, this construction also yields a $G$-equivariant drifting field, as shown in Section~\ref{sec:equi_drift_alignment}.

While the optimal rotation $R^*$ can be computed efficiently via the Kabsch algorithm~\cite{kabsch1976solution}, determining the optimal permutation $\Pi^*$ remains challenging. A brute-force search over permutations scales factorially in the number of identical atoms and is therefore intractable. However, even approximate strategies introduce significant overhead. A common approach alternates between estimating $\Pi^*$ using the Hungarian~\cite{kuhn1955hungarian} algorithm and updating $R^*$ via Kabsch~\cite{klein2023equivariant}. Although the Hungarian algorithm reduces the combinatorial complexity, it still scales cubically in the number of atoms. This makes the iterative alignment procedure computationally expensive, while also being prone to local minima and lacking guarantees of global consistency.

\subsection{Method 2: Drifting in an Invariant Embedding Space}
\label{sec:route_invariant}

An alternative approach is to compute the drift in a fixed embedding space instead of the coordinate space, as shown in Equation~\ref{eq:drift_loss_embedded}. For this construction to provide a meaningful learning signal, the feature extractor must be sufficiently expressive~\cite{deng2026generative}. Otherwise, semantically distinct inputs may collapse in the embedding space, causing the kernel $k(\cdot,\cdot)$ to assign high similarity to configurations that differ in structurally relevant ways. However, avoiding collapse is only part of the requirement, since the embedding must also faithfully reflect the geometry of the domain. Specifically, if the feature representation does not explicitly account for the symmetries of the underlying physical system, the resulting distance metric in this space may misrepresent the true physical differences between states and thereby render the learning signal unreliable.

\paragraph{Invariant embedding construction.}

To incorporate symmetry into the representation, we construct a $G$-invariant embedding based on type-conditioned pairwise distances. Let $\mathbf{x} = (x_1,\dots,x_N) \in \mathbb{R}^{N\times 3}$ denote a molecular configuration, where each atom $x_i \in \mathbb{R}^3$ has a type $t_i \in \mathcal{T}$. For a pair $(i,j)$, define its interaction type as $\tau(i,j) := \{t_i,t_j\}$, and let $\mathcal{T}_{\mathrm{pairs}} = \{\{a,b\} \mid a,b \in \mathcal{T}\}$. For each interaction type \(\alpha \in \mathcal{T}_{\mathrm{pairs}}\), we collect the corresponding pairwise distances and concatenate the sorted distance sets:
\begin{equation}
    \label{eq:sorting}
    \mathcal{D}_\alpha(\mathbf{x})
    :=
    \{ \|x_i - x_j\| \mid 1 \le i < j \le N,\; \tau(i,j)=\alpha \},
    \qquad 
    \phi(\mathbf{x})
    =
    \bigoplus_{\alpha \in \mathcal{T}_{\mathrm{pairs}}}
    \operatorname{sort}\big(\mathcal{D}_\alpha(\mathbf{x})\big).
\end{equation}
By construction, \(\phi\) depends only on pairwise distances and is therefore invariant to global rotations ($SO(3)$). Grouping distances by interaction type and sorting within each group further removes dependence on permutations of atoms within the same species ($\hat S_N$). As a result, \(\phi\) is invariant under the action of \(G = SO(3)\times \hat S_N\).

\paragraph{Structural properties.}

The embedding $\phi$ is constructed as a symmetry-invariant representation over typed point configurations with distance structure. It is therefore invariant by construction and efficient to compute. We summarize the properties of the embedding in Proposition~\ref{prop:invariance}.

\begin{tcolorbox}[title=Invariance and Regularity]
\label{prop:invariance}
\begin{proposition}
Let \(\mathcal{X}_N = \{\mathbf{x} \in \mathbb{R}^{N\times 3} : \sum_i x_i = 0\}\), where each \(x_i\) has type \(t_i \in \mathcal{T}\). For \(\alpha \in \mathcal{T}_{\mathrm{pairs}}\), define
\[
\mathcal{D}_\alpha(\mathbf{x}) = \{ \|x_i - x_j\| : \{t_i,t_j\}=\alpha \},
\quad
\phi(\mathbf{x}) = \bigoplus_\alpha \operatorname{sort}(\mathcal{D}_\alpha(\mathbf{x})).
\]
Let \(G = SO(3)\times \hat S_N\). Then:

\textbf{(i) Invariance.} \(\phi\) is \(G\)-invariant:
\(
\phi(\mathbf{x}) = \phi(g\mathbf{x}), \quad \forall g \in G.
\)

\textbf{(ii) Regularity.} \(\phi\) is piecewise smooth and differentiable almost everywhere, with non-differentiability only when pairwise distances within a type class coincide.
\end{proposition}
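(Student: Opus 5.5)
For (i), I would first pin down the action. An element $g=(R,\sigma)\in SO(3)\times\hat S_N$ acts by $(g\mathbf{x})_i = R\,x_{\sigma^{-1}(i)}$. Here $\hat S_N$ denotes the type-preserving permutations, so $t_{\sigma(i)}=t_i$. Rotations fix every pairwise distance, since $\|Rx_i-Rx_j\|=\|x_i-x_j\|$, so I only need to handle the permutation part. The map $\{i,j\}\mapsto\{\sigma(i),\sigma(j)\}$ is a bijection on unordered pairs. Because $\sigma$ preserves types, it sends pairs with interaction type $\alpha$ to pairs with the same type $\alpha$. Hence $\mathcal{D}_\alpha(g\mathbf{x})=\mathcal{D}_\alpha(\mathbf{x})$ as multisets, which is the right reading of the set notation because repeated distances must be kept. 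The sorted vector of a multiset depends only on the multiset, so each block of $\phi$ is unchanged. The concatenation uses a fixed ordering of $\mathcal{T}_{\mathrm{pairs}}$, so $\phi(g\mathbf{x})=\phi(\mathbf{x})$. The centering constraint in $\mathcal{X}_N$ is preserved by both rotations and permutations, so the action is well defined on $\mathcal{X}_N$.

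For (ii), I would write $\phi=S\circ d$. Here $d:\mathcal{X}_N\to\mathbb{R}^{M}$ is the vector of all pairwise distances $d_{ij}(\mathbf{x})=\|x_i-x_j\|$, grouped by type. $S$ applies coordinatewise sorting within each block.

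Step one is the regularity of $d$. Each $d_{ij}$ is smooth (real-analytic) wherever $x_i\neq x_j$, with gradient $(x_i-x_j)/\|x_i-x_j\|$. Its only failures of smoothness occur on the collision set $\{x_i=x_j\}$. This set is a linear subspace of codimension 3, so it has measure zero.

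Step two is the regularity of $S$. Sorting is continuous and piecewise linear. On each open Weyl chamber $\{v_{\pi(1)}<\dots<v_{\pi(m)}\}$ it equals a fixed permutation matrix. Pulling back, the set $U$ is defined as the points where all distances within each type class are pairwise distinct and no two atoms collide. On each connected component of $U$, the sorting permutation is locally constant, so $\phi$ is there a fixed coordinate permutation of $d$, hence smooth. This gives piecewise smoothness. The candidate non-differentiability locus is then exactly the complement of $U$: the tie sets $\{d_{ij}=d_{kl}\}$ with $\tau(i,j)=\tau(k,l)$, together with collisions. A collision is itself a tie with a zero distance, or can be noted separately as a measure-zero degenerate case.

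Step three is the measure-zero claim, which I expect to be the main obstacle. For two distinct pairs, the tie condition is $h(\mathbf{x}):=\|x_i-x_j\|^2-\|x_k-x_l\|^2=0$, and $h$ is a nonzero quadratic polynomial on $\mathcal{X}_N$. To see it is nonzero, pick any configuration with the two distances different; this is possible when $N\ge 2$ and the pairs are distinct, and then I re-center. The zero set of a nonzero polynomial has Lebesgue measure zero. A finite union of such sets, together with the collision subspaces, is still null. So $\phi$ is differentiable almost everywhere, and non-differentiability can occur only on within-class ties, as claimed. Real ties can genuinely be kinks, because sorting switches branches there, so the statement is sharp in the "only when" direction.
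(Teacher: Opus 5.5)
Your proposal is correct and follows the paper's proof essentially step for step. For (i), it uses distance preservation under $SO(3)$ and the type-preserving bijection on pairs; for (ii), it uses smoothness of $\|x_i-x_j\|$ off collisions, a locally constant sorting permutation wherever within-class distances are distinct, and measure-zero tie and collision sets. Your version is somewhat more careful than the paper's: you measure null sets inside $\mathcal{X}_N$ itself and verify that the tie polynomial is nonzero. However, drop the remark that a collision is itself a within-class tie. It fails, for example, when $\{t_i,t_j\}$ is realized by a single pair, so, as in the paper, collisions must be listed as a separate exceptional set.
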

\end{tcolorbox}
A desirable property of an invariant embedding is injectivity, which ensures that two point clouds not related by a symmetry do not map to the same representation. Our embedding $\phi$ does not guarantee injectivity, since it retains only type-resolved distance multisets and discards incidence information, i.e., which distances are associated with which atom pairs. This choice provides a practical trade-off between expressivity and computational efficiency. Importantly, however, our framework is not tied to a particular choice of $\phi$. Any differentiable invariant embedding can be used to define the drifting field, which includes embeddings based on $\phi$ with more fine-grained node types derived from a graph coloring algorithm or on more expressive geometric message-passing neural networks.

\paragraph{Computational complexity.}
The embedding is non-parametric and requires no training. Its computational cost is dominated by pairwise distance computations and sorting. For a point cloud with $N$ points, the worst-case complexity is $\mathcal{O}(N^2 \log N)$ and, in typical finite-precision implementations, effectively $\mathcal{O}(N^2)$. In contrast, the alignment-based method introduced in Section~\ref{sec:method_alignment} requires solving a permutation matching problem for each pair of configurations. In practice, this is typically approximated by alternating rotational alignment with type-wise assignment, whose Hungarian step has worst-case complexity $\mathcal{O}(N^3)$.

\section{Experiments}
\label{sec:experiments}

We empirically evaluate \method by comparing generated and ground-truth conformers and TSs using distance-based metrics. Implementation and metrics details are provided in Section~\ref{sec:implementation}.

\subsection{Molecular Conformer Generation}

\paragraph{Datasets and Metrics.}  

For conformer generation, we evaluate our models on the GEOM-QM9 dataset (133,258 small molecules)~\cite{axelrod2022geom}, which generated the reference conformers using CREST~\cite{pracht2024crest}. We adopt the standard data splits from~\citet{ganea2021geomol}, allocating 80\% of the data for training, 10\% for validation, and using a subset of 1,000 molecular graphs for testing. Model performance is evaluated using coverage (COV) and average minimum root-mean-square deviation (AMR). A lower AMR indicates higher accuracy, while a higher COV reflects greater diversity. For both metrics, we report recall, measuring the extent to which the generated conformers capture the ground-truth conformers, and precision, which quantifies the proportion of generated conformers that are accurate. For the coverage metric, a generated conformer is considered correct if it lies within a threshold of $\delta = 0.5$~\AA. Following prior work, we generate $2K$ conformers per molecular graph, where $K$ is the number of reference conformers, and apply chirality correction as introduced in~\citet{ganea2021geomol}.

\paragraph{Results.}

We consider two architectures for evaluating \method, namely the equivariant model introduced in ET-Flow~\cite{hassan2024flow} with 8.3M parameters, and the non-equivariant DiTMC model~\cite{frank2025sampling} with 8.9M parameters. Notably, by embedding symmetry awareness directly into the learned drift field, we eliminate the data augmentation that flow matching approaches typically require to approximate equivariance under non-equivariant parameterizations.

In Table~\ref{tab:performance_qm9} we report the results on the GEOM-QM9 dataset for both model architectures trained in the $G$-invariant embedding space using two-sided kernel normalization. The resulting models achieve state-of-the-art performance across all accuracy metrics among one-step inference baselines. However, diversity, as measured by coverage, is slightly lower compared to EnFlow~\cite{xu2025energy}. We attribute this behavior to the structure of the drifting field. In particular, the kernel-based similarity used to compute the drift tends to concentrate on the nearest target sample in low-temperature regimes, effectively inducing a form of mode collapse~\cite{he2026sinkhorn}. This effect can be mitigated by alternative kernel normalization strategies, as moving from one-sided to two-sided normalization improves performance across all metrics, with the largest gains in coverage, indicating improved diversity (see Table~\ref{tab:ablation_drifting_norm_geom_qm9}). We further study the effect of negative samples $N^{-}$ in additional ablations (Section~\ref{sec:add_experiments}), showing that increasing the number of negative samples during training consistently improves generation quality.

In terms of efficiency, our method is up to 3.5$\times$ faster than EnFlow, which relies on an additional energy model for guidance. Furthermore, drifting is inherently a one-shot model and therefore does not require the additional reflow training stage needed by EnFlow to achieve competitive one-step accuracy. Even when compared to more computationally expensive settings such as ET-Flow, we achieve performance comparable to the 50 NFE configuration, while our one-shot generation reduces inference cost by a factor of 40. This highlights a clear trade-off between generation accuracy and computational efficiency.

\begin{table}[ht]
\centering
\caption{Molecule conformer generation results on GEOM-QM9 ($\delta$= 0.5\AA). “-R” denotes Recall and “-P” denotes Precision. Best in bold, second-best underlined; results separated into one-step and multi-step.}
\label{tab:performance_qm9}
\begin{adjustbox}{max width=\linewidth}
\begin{tabular}{lcccccccccc}
\toprule
& & Inference & \multicolumn{2}{c}{Coverage-R (\%) $\uparrow$} & \multicolumn{2}{c}{AMR-R (\AA) $\downarrow$} 
& \multicolumn{2}{c}{Coverage-P (\%) $\uparrow$} & \multicolumn{2}{c}{AMR-P  (\AA) $\downarrow$} \\
\cmidrule(lr){4-5} \cmidrule(lr){6-7} \cmidrule(lr){8-9} \cmidrule(lr){10-11}
Method & NFE & time (ms) & Mean & Median & Mean & Median & Mean & Median & Mean & Median \\
\midrule
GeoDiff~\cite{xu2022geodiff} & 5000 & $4.5 \times 10^{3}$ & 76.5 & 100.0 & 0.297 & 0.229 & 50.00 & 33.5 & 1.524 & 0.510 \\
MCF~\cite{wang2024swallowing} & 1000 & $3.2 \times 10^{3}$ & 95.0 & 100.0 & 0.103 & \underline{0.044} & 93.7 & 100.0 & 0.119 & 0.055 \\
ET-Flow~\cite{hassan2024flow} & 50 & 76.8 & \textbf{96.5} & 100.0 & \underline{0.073} & 0.047 & \underline{94.1} & 100.0 & \underline{0.098} & \underline{0.039} \\
{OrbDiff}$_\text{ET-Flow}$~\cite{tong2025raoblackwell} & 50 & 76.8 & 96.3 & 100.0 & 0.074 & \textbf{0.027} & 91.9 & 100.0 & 0.113 & 0.042 \\
DiTMC+rPE (B)~\cite{frank2025sampling} & 50 & 35.5 & \underline{96.3}  & 100.0 & \textbf{0.070} & \textbf{0.027} & \textbf{95.7} & 100.0 & \textbf{0.080} & \textbf{0.035} \\
\midrule
GeoMol~\cite{ganea2021geomol} & 1 & 12.5 & 91.5 & 100.0 & 0.225 & 0.193 & 87.6 & 100.0 & 0.270 & 0.241 \\
AvgFlow$_\text{NequIP-D}$~\cite{cao2025efficient} & 1 & 1.8 & 95.1 & 100.0 & 0.220 &0.195 & 84.8 & 100.0 & 0.304 & 0.283 \\
EnFlow-$SO3_\text{with reflow}$~\cite{xu2025energy} & 1 & 6.9 & \textbf{96.7} & 100.0 & 0.122 & 0.087 & \underline{93.5} & 100.0 & 0.170 & 0.132 \\
\textbf{\method$_\text{ET-Flow}$} & 1 & 1.9 & \underline{95.7} & 100.0 & \underline{0.118} & \underline{0.059} & 92.0 & 100.0 & \underline{0.141} & \underline{0.085} \\
\textbf{\method$_\text{DiTMC+airPE (B)}$} & 1 & 1.4 & 95.6 & 100.0 & \textbf{0.106} & \textbf{0.058} & \textbf{93.7} & 100.0 & \textbf{0.124} & \textbf{0.067} \\
\bottomrule
\end{tabular}
\end{adjustbox}
\end{table}

\subsection{Transition state generation}

\paragraph{Datasets and Metrics.}

As an additional task, we consider the generation of TSs. Compared to conformers, which are located around local minima of the potential energy surface, TSs are first-order saddle points and significantly more challenging to model due to their inherent instability. To analyze the performance of \method, we use the RDB7 dataset~\cite{spiekermann2022high}, which contains 11,926 gas-phase reactions involving H, C, N, and O with up to seven heavy atoms. We follow~\citet{kim2024diffusion} and randomly split the dataset into training, validation, and test sets with an 80/10/10 ratio. For each reaction, we generate 25 TS candidates, compute their geometric median, and select as the final prediction the sample closest to this median, following~\citet{galustian2025goflow}. For each prediction, we assess structural accuracy using the RMSD to capture global similarity, as well as the distance matrix absolute error (DMAE) to capture more fine-grained structural differences.

\paragraph{Results.}

Table~\ref{tab:performance_rdb7} reports results on the RDB7 dataset. We adopt the same architecture as GoFlow~\cite{galustian2025goflow} and increase the model size from 5M (B) to 18.7M (L) parameters to account for the increased complexity of the one-step inference setting compared to the multi-step flow-matching baseline. Despite this, the one-shot inference of the larger model is 87$\times$ faster than the original GoFlow architecture with the recommended 25-step inference. Notably, even with a single inference step, the drifting model achieves comparable and in some cases improved accuracy. This efficiency is particularly important for applications involving large reaction networks, where the ability to generate TS candidates at scale is critical, and demonstrates that \method is well-suited for such settings. An additional comparison between one-step flow matching and \method, using the same network architecture, is provided in Section~\ref{sec:add_experiments}.

\begin{table*}[ht]
\small
\centering
\caption{TS generation results on RDB7, with RMSD as a global structural metric and DMAE capturing fine-grained geometric details. Best result in bold across all methods; second-best underlined.}
\vspace{0.5em}
\label{tab:performance_rdb7}
\begin{adjustbox}{max width=\linewidth}
\begin{tabular}{@{\extracolsep\fill}lllllllll}
\toprule
Method & NFE & Inference &\multicolumn{2}{c}{RMSD (\AA)} & \multicolumn{2}{c}{DMAE (\AA)} \\
\cmidrule(lr){4-5} \cmidrule(lr){6-7}
& & time (ms) & Mean & Median & Mean & Median \\
\midrule
TsDiff$_\text{reproduced}$~\cite{kim2024diffusion} & 5000 & 1544 & 0.422 & 0.366 & 0.156 & 0.096 \\
GoFlow$_\text{reproduced}$ (B)~\cite{galustian2025goflow} & 25 & 113 & \textbf{0.328} & \underline{0.258} & \textbf{0.128} & \underline{0.095}  \\
\midrule
TSGen$_\text{reproduced}$~\cite{pattanaik2020generating} & 1 & 9.5 & 0.782 & 0.742 & 0.388 & 0.352 \\
\method$_\text{GoFlow (L)}$ & 1 & 1.3 & \textbf{0.328} & \textbf{0.239} & 0.134 & \textbf{0.092} \\
\bottomrule
\end{tabular}
\end{adjustbox}
\end{table*}

\subsection{Different realizations of \method}

Figure~\ref{fig:ablation_drift_realizations} presents an ablation study on the GEOM-QM9 dataset, assessing how different formulations of the drifting field, ranging from base coordinate and iterative alignment to embedding space representations, affect generative performance. Validating Theorem~\ref{thm:sym_problem}, we observe a complete training collapse when applying the base coordinate drift to an equivariant architecture. The model achieves 0\% coverage at the defined threshold of 0.5~\AA\ and produces invalid, overlapping atomic structures. However, as we progressively enforce symmetry within the drifting field, generative quality reliably improves. Although basic rotational alignment prevents complete collapse, the resulting performance is still heavily restricted. Iterative alignment offers further gains but ultimately falls short of the fully $G$-invariant embedding space, largely due to the optimization landscape of the iterative objective trapping the model in local optima. We defer a deeper investigation of these Cartesian realizations and the non-equivariant baseline to Section~\ref{sec:different_realizations}.

\begin{figure}[ht]
\centering
\includegraphics[width=\textwidth]{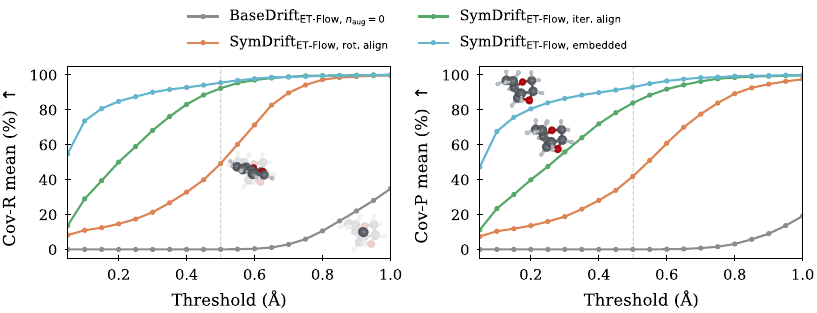}
\caption{
\textbf{Different realizations of \method.} 
We evaluate drifting-fields variants on the GEOM-QM9 dataset using the equivariant ET-Flow architecture, including the base drift and \method with different coordinate-space alignment strategies and embedding-based representations. Example predictions illustrate progressively improved performance with increasing incorporation of symmetry information.
}\label{fig:ablation_drift_realizations}
\end{figure}

\section{Discussion}
\label{sec:discussion}
In this work, we propose \method, a one-shot generative model capable of learning highly symmetrized distributions by explicitly incorporating these symmetries into the design of the drifting field. We demonstrate state-of-the-art performance on conformer and TS generation tasks while significantly reducing inference latency. 

\paragraph{Limitations.} While \method enables highly efficient single-step inference, it increases training time. Our distribution-matching necessitates hierarchically sampling $N_\text{c}$ classes per step, each with $N^{+}$ data samples and $N^{-}$ generated samples. Given fixed memory budgets, this reduces the unique classes per batch compared to standard flow-matching, slowing dataset traversal. Additionally, enforcing symmetries introduces overhead, requiring either $N^{+} \times N^{-}$ pairwise alignments per class or computing $G$-invariant embeddings. However, this higher training cost is a favorable trade-off. It is amortized by an order-of-magnitude faster inference, making \method ideal for high-throughput applications like virtual drug screening and large-scale reaction network discovery.

\paragraph{Future Work.} 
A natural limitation of our practical embedding is that it is invariant but not maximally expressive. Constructing invariant representations that distinguish point configurations as finely as possible is a central question in recent work on geometric message-passing networks for 3D point clouds. These works study when invariant or equivariant architectures can distinguish configurations up to Euclidean transformations and permutations, often using geometric variants of the Weisfeiler--Leman test~\citep{joshi2023expressive,hordan2024weisfeiler,sverdlov2025sparse}. This suggests a promising direction for future work: replacing the type-resolved distance multiset used in this paper with more expressive invariant embeddings based on geometric message passing, which may separate molecular configurations more finely while preserving the invariance properties required by the drifting objective.

\begin{ack}

SD would like to thank Johannes Kästner for the guidance and funding provided. This research was funded by the Ministry of Science, Research and the Arts Baden-Wuerttemberg in the Artificial Intelligence Software Academy (AISA). We also acknowledge the support of the Stuttgart Center for Simulation Science (SimTech) and thank the International Max Planck Research School for Intelligent Systems (IMPRS-IS) for support. Tanja was supported by the Deutsche Forschungsgemeinschaft (DFG, German Research Foundation) under Germany’s Excellence Strategy – EXC 2120/1 – 390831618. L.\ Mualem was supported by a postdoctoral scholarship from the Planning and Budgeting Committee (PBC) of the Council for Higher Education in Israel. The authors acknowledge support by the High Performance and Cloud Computing Group at the Zentrum für Datenverarbeitung of the University of Tübingen, the state of Baden-Württemberg through bwHPC and the German Research Foundation (DFG) for funding under "Project number 455787709" (bwForCluster BinAC 2). The authors gratefully acknowledge the computing time provided on the high-performance computer HoreKa by the National High-Performance Computing Center at KIT (NHR@KIT). This center is jointly supported by the Federal Ministry of Education and Research and the Ministry of Science, Research and the Arts of Baden-Württemberg, as part of the National High-Performance Computing (NHR) joint funding program (https://www.nhr-verein.de/en/our-partners). HoreKa is partly funded by the German Research Foundation (DFG). 

\end{ack}

\clearpage


\bibliographystyle{unsrtnat}
\bibliography{literature}

@article{axelrod2022geom,
  title={GEOM, energy-annotated molecular conformations for property prediction and molecular generation},
  author={Axelrod, Simon and Gomez-Bombarelli, Rafael},
  journal={Scientific data},
  volume={9},
  number={1},
  pages={185},
  year={2022},
  publisher={Nature Publishing Group UK London}
}

@article{spiekermann2022high,
  title={High accuracy barrier heights, enthalpies, and rate coefficients for chemical reactions},
  author={Spiekermann, Kevin and Pattanaik, Lagnajit and Green, William H},
  journal={Scientific Data},
  volume={9},
  number={1},
  pages={417},
  year={2022},
  publisher={Nature Publishing Group UK London}
}

@article{deng2026generative,
  title={Generative Modeling via Drifting},
  author={Deng, Mingyang and Li, He and Li, Tianhong and Du, Yilun and He, Kaiming},
  journal={arXiv preprint arXiv:2602.04770},
  year={2026}
}

@inproceedings{
    tong2025learning,
    title={Learning to Discretize Denoising Diffusion {ODE}s},
    author={Vinh Tong and Dung Trung Hoang and Anji Liu and Guy Van den Broeck and Mathias Niepert},
    booktitle={The Thirteenth International Conference on Learning Representations},
    year={2025},
}

@inproceedings{song2023consistency,
  title={Consistency models},
  author={Song, Yang and Dhariwal, Prafulla and Chen, Mark and Sutskever, Ilya},
  booktitle={Proceedings of the 40th International Conference on Machine Learning},
  year={2023}
}

@inproceedings{geng2025mean,
  title={Mean Flows for One-step Generative Modeling},
  author={Geng, Zhengyang and Deng, Mingyang and Bai, Xingjian and Kolter, J Zico and He, Kaiming},
  booktitle={The Thirty-ninth Annual Conference on Neural Information Processing Systems},
  year={2025}
}

@article{chen2024equivariant,
  title={Equivariant score-based generative models provably learn distributions with symmetries efficiently},
  author={Chen, Ziyu and Katsoulakis, Markos A and Zhang, Benjamin J},
  journal={arXiv preprint arXiv:2410.01244},
  year={2024}
}

@inproceedings{kohler2020equivariant,
  title={Equivariant flows: exact likelihood generative learning for symmetric densities},
  author={K{\"o}hler, Jonas and Klein, Leon and No{\'e}, Frank},
  booktitle={International conference on machine learning},
  pages={5361--5370},
  year={2020},
  organization={PMLR}
}

@article{ho2020denoising,
  title={Denoising diffusion probabilistic models},
  author={Ho, Jonathan and Jain, Ajay and Abbeel, Pieter},
  journal={Advances in neural information processing systems},
  volume={33},
  pages={6840--6851},
  year={2020}
}

@inproceedings{
    song2021scorebased,
    title={Score-Based Generative Modeling through Stochastic Differential Equations},
    author={Yang Song and Jascha Sohl-Dickstein and Diederik P Kingma and Abhishek Kumar and Stefano Ermon and Ben Poole},
    booktitle={International Conference on Learning Representations},
    year={2021},
}

@article{karras2022elucidating,
  title={Elucidating the design space of diffusion-based generative models},
  author={Karras, Tero and Aittala, Miika and Aila, Timo and Laine, Samuli},
  journal={Advances in neural information processing systems},
  volume={35},
  pages={26565--26577},
  year={2022}
}

@article{
  tong2024improving,
  title={Improving and generalizing flow-based generative models with minibatch optimal transport},
  author={Alexander Tong and Kilian FATRAS and Nikolay Malkin and Guillaume Huguet and Yanlei Zhang and Jarrid Rector-Brooks and Guy Wolf and Yoshua Bengio},
  journal={Transactions on Machine Learning Research},
  issn={2835-8856},
  year={2024},
}

@inproceedings{
    liu2023flow,
    title={Flow Straight and Fast: Learning to Generate and Transfer Data with Rectified Flow},
    author={Xingchao Liu and Chengyue Gong and qiang liu},
    booktitle={The Eleventh International Conference on Learning Representations },
    year={2023},
}

@inproceedings{
    kim2025simple,
    title={Simple ReFlow: Improved Techniques for Fast Flow Models},
    author={Beomsu Kim and Yu-Guan Hsieh and Michal Klein and marco cuturi and Jong Chul Ye and Bahjat Kawar and James Thornton},
    booktitle={The Thirteenth International Conference on Learning Representations},
    year={2025},
}

@inproceedings{liu2023instaflow,
  title={Instaflow: One step is enough for high-quality diffusion-based text-to-image generation},
  author={Liu, Xingchao and Zhang, Xiwen and Ma, Jianzhu and Peng, Jian and others},
  booktitle={The Twelfth International Conference on Learning Representations},
  year={2023}
}

@inproceedings{
    lee2024improving,
    title={Improving the Training of Rectified Flows},
    author={Sangyun Lee and Zinan Lin and Giulia Fanti},
    booktitle={The Thirty-eighth Annual Conference on Neural Information Processing Systems},
    year={2024},
}

@inproceedings{zhu2024slimflow,
  title={Slimflow: Training smaller one-step diffusion models with rectified flow},
  author={Zhu, Yuanzhi and Liu, Xingchao and Liu, Qiang},
  booktitle={European Conference on Computer Vision},
  pages={342--359},
  year={2024},
  organization={Springer}
}

@InProceedings{hoogeboom2022equidiff,
  title = 	 {Equivariant Diffusion for Molecule Generation in 3{D}},
  author =       {Hoogeboom, Emiel and Satorras, V\'{\i}ctor Garcia and Vignac, Cl{\'e}ment and Welling, Max},
  booktitle = 	 {Proceedings of the 39th International Conference on Machine Learning},
  year = 	 {2022},
  IGNOREpages = 	 {8867--8887},
  IGNOREvolume = 	 {162},
  IGNOREseries = 	 {Proceedings of Machine Learning Research},
  IGNOREmonth = 	 {17--23 Jul},
  IGNOREpublisher =    {PMLR},
}

@article{klein2023equivariant,
  title={Equivariant flow matching},
  author={Klein, Leon and Kr{\"a}mer, Andreas and No{\'e}, Frank},
  journal={Advances in Neural Information Processing Systems},
  IGNOREvolume={36},
  IGNOREpages={59886--59910},
  year={2023}
}

@article{song2023equivariant,
  title={Equivariant flow matching with hybrid probability transport for 3d molecule generation},
  author={Song, Yuxuan and Gong, Jingjing and Xu, Minkai and Cao, Ziyao and Lan, Yanyan and Ermon, Stefano and Zhou, Hao and Ma, Wei-Ying},
  journal={Advances in Neural Information Processing Systems},
  IGNOREvolume={36},
  IGNOREpages={549--568},
  year={2023}
}

@article{he2026sinkhorn,
  title={Sinkhorn-Drifting Generative Models},
  author={He, Ping and Khangaonkar, Om and Pirsiavash, Hamed and Bai, Yikun and Kolouri, Soheil},
  journal={arXiv preprint arXiv:2603.12366},
  year={2026}
}

@article{zeng2026propmolflow,
  title={PropMolFlow: property-guided molecule generation with geometry-complete flow matching},
  author={Zeng, Cheng and Jin, Jirui and Ambrose, Connor and Karypis, George and Transtrum, Mark and Tadmor, Ellad B and Hennig, Richard G and Roitberg, Adrian and Martiniani, Stefano and Liu, Mingjie},
  journal={Nature Computational Science},
  pages={1--10},
  year={2026},
  publisher={Nature Publishing Group US New York}
}

@inproceedings{lipman2023flow,
title={Flow Matching for Generative Modeling},
author={Yaron Lipman and Ricky T. Q. Chen and Heli Ben-Hamu and Maximilian Nickel and Matthew Le},
booktitle={The Eleventh International Conference on Learning Representations },
year={2023}
}

@article{albergo2025stochastic,
  title={Stochastic interpolants: A unifying framework for flows and diffusions},
  author={Albergo, Michael and Boffi, Nicholas M and Vanden-Eijnden, Eric},
  journal={Journal of Machine Learning Research},
  volume={26},
  number={209},
  pages={1--80},
  year={2025}
}

@inproceedings{
    zheng2026diffusion,
    title={Diffusion Transformers with Representation Autoencoders},
    author={Boyang Zheng and Nanye Ma and Shengbang Tong and Saining Xie},
    booktitle={The Fourteenth International Conference on Learning Representations},
    year={2026},
}

@article{pracht2024crest,
  title={CREST—A program for the exploration of low-energy molecular chemical space},
  author={Pracht, Philipp and Grimme, Stefan and Bannwarth, Christoph and Bohle, Fabian and Ehlert, Sebastian and Feldmann, Gereon and Gorges, Johannes and M{\"u}ller, Marcel and Neudecker, Tim and Plett, Christoph and others},
  journal={The Journal of Chemical Physics},
  volume={160},
  number={11},
  year={2024},
  publisher={AIP Publishing}
}

@article{riniker2015better,
  title={Better informed distance geometry: using what we know to improve conformation generation},
  author={Riniker, Sereina and Landrum, Gregory A},
  journal={Journal of chemical information and modeling},
  volume={55},
  number={12},
  pages={2562--2574},
  year={2015},
  publisher={ACS Publications}
}

@article{luo2021predicting,
  title={Predicting molecular conformation via dynamic graph score matching},
  author={Luo, Shitong and Shi, Chence and Xu, Minkai and Tang, Jian},
  journal={Advances in neural information processing systems},
  volume={34},
  pages={19784--19795},
  year={2021}
}

@inproceedings{shi2021learning,
  title={Learning gradient fields for molecular conformation generation},
  author={Shi, Chence and Luo, Shitong and Xu, Minkai and Tang, Jian},
  booktitle={International conference on machine learning},
  pages={9558--9568},
  year={2021},
  organization={PMLR}
}

@article{xu2022geodiff,
  title={Geodiff: A geometric diffusion model for molecular conformation generation},
  author={Xu, Minkai and Yu, Lantao and Song, Yang and Shi, Chence and Ermon, Stefano and Tang, Jian},
  journal={arXiv preprint arXiv:2203.02923},
  year={2022}
}

@inproceedings{
    ganea2021geomol,
    title={GeoMol: Torsional Geometric Generation of Molecular 3D Conformer Ensembles},
    author={Octavian-Eugen Ganea and Lagnajit Pattanaik and Connor W. Coley and Regina Barzilay and Klavs Jensen and William Green and Tommi S. Jaakkola},
    booktitle={Advances in Neural Information Processing Systems},
    editor={A. Beygelzimer and Y. Dauphin and P. Liang and J. Wortman Vaughan},
    year={2021},
}

@article{jing2022torsional,
  title={Torsional diffusion for molecular conformer generation},
  author={Jing, Bowen and Corso, Gabriele and Chang, Jeffrey and Barzilay, Regina and Jaakkola, Tommi},
  journal={Advances in neural information processing systems},
  volume={35},
  pages={24240--24253},
  year={2022}
}

@inproceedings{wang2024swallowing,
  title={Swallowing the Bitter Pill: Simplified Scalable Conformer Generation},
  author={Wang, Yuyang and Elhag, Ahmed AA and Jaitly, Navdeep and Susskind, Joshua M and Bautista, Miguel {\'A}ngel},
  booktitle={International Conference on Machine Learning},
  pages={50400--50418},
  year={2024},
  organization={PMLR}
}

@article{hassan2024flow,
  title={Et-flow: Equivariant flow-matching for molecular conformer generation},
  author={Hassan, Majdi and Shenoy, Nikhil and Lee, Jungyoon and St{\"a}rk, Hannes and Thaler, Stephan and Beaini, Dominique},
  journal={Advances in Neural Information Processing Systems},
  volume={37},
  pages={128798--128824},
  year={2024}
}

@inproceedings{
liu2025nextmol,
    title={{NE}xT-Mol: 3D Diffusion Meets 1D Language Modeling for 3D Molecule Generation},
    author={Zhiyuan Liu and Yanchen Luo and Han Huang and Enzhi Zhang and Sihang Li and Junfeng Fang and Yaorui Shi and Xiang Wang and Kenji Kawaguchi and Tat-Seng Chua},
    booktitle={The Thirteenth International Conference on Learning Representations},
    year={2025},
}

@article{frank2025sampling,
  title={Sampling 3D Molecular Conformers with Diffusion Transformers},
  author={Frank, J Thorben and Ripken, Winfried and Lied, Gregor and M{\~A}{\v{z}}ller, Klaus-Robert and Unke, Oliver T and Chmiela, Stefan},
  journal={arXiv preprint arXiv:2506.15378},
  year={2025}
}

@inproceedings{
    cao2025efficient,
    title={Efficient Molecular Conformer Generation with {SO}(3)-Averaged Flow Matching and Reflow},
    author={Zhonglin Cao and Mario Geiger and Allan Dos Santos Costa and Danny Reidenbach and Karsten Kreis and Tomas Geffner and Franco Pellegrini and Guoqing Zhou and Emine Kucukbenli},
    booktitle={Forty-second International Conference on Machine Learning},
    year={2025},
}

@article{xu2025energy,
  title={Energy-Guided Flow Matching Enables Few-Step Conformer Generation and Ground-State Identification},
  author={Xu, Guikun and Yi, Xiaohan and Zhao, Peilin and Bian, Yatao},
  journal={arXiv preprint arXiv:2512.22597},
  year={2025}
}

@inproceedings{tong2025raoblackwell,
  title     = {Rao-Blackwell Gradient Estimators for Equivariant Denoising Diffusion},
  author    = {Tong, Vinh and Hoang, Trung-Dung and Liu, Anji and Van den Broeck, Guy and Niepert, Mathias},
  booktitle = {Proceedings of the 39th Conference on Neural Information Processing Systems},
  year      = {2025}
}

@article{henkelman_climbing_2000,
	title = {A climbing image nudged elastic band method for finding saddle points and minimum energy paths},
	volume = {113},
	issn = {0021-9606},
	doi_ = {10.1063/1.1329672},
	number = {22},
	urldate = {2024-06-24},
	journal = {J. Chem. Phys.},
	author = {Henkelman, Graeme and Uberuaga, Blas P. and Jónsson, Hannes},
	month = dec,
	year = {2000},
	pages = {9901--9904},
}

@incollection{jonsson1998nudged,
  title={Nudged elastic band method for finding minimum energy paths of transitions},
  author={J{\'o}nsson, Hannes and Mills, Greg and Jacobsen, Karsten W},
  booktitle={Classical and quantum dynamics in condensed phase simulations},
  pages={385--404},
  year={1998},
  publisher={World Scientific}
}

@article{banerjee_search_1985,
	title = {Search for stationary points on surfaces},
	volume = {89},
	issn = {0022-3654, 1541-5740},
	doi_ = {10.1021/j100247a015},
	language = {en},
	number = {1},
	urldate = {2025-01-19},
	journal = {J. Phys. Chem.},
	author = {Banerjee, Ajit and Adams, Noah and Simons, Jack and Shepard, Ron},
	month = jan,
	year = {1985},
	pages = {52--57},
}

@article{henkelman_dimer_1999,
	title = {A dimer method for finding saddle points on high dimensional potential surfaces using only first derivatives},
	volume = {111},
	issn = {0021-9606},
	doi_ = {10.1063/1.480097},
	number = {15},
	urldate = {2024-06-24},
	journal = {J. Chem. Phys.},
	author = {Henkelman, Graeme and Jónsson, Hannes},
	month = oct,
	year = {1999},
	pages = {7010--7022},
}

@article{wander_cattsunami_2024,
    author = {Wander, Brook and Shuaibi, Muhammed and Kitchin, John R. and Ulissi, Zachary W. and Zitnick, C. Lawrence},
    title = {CatTSunami: Accelerating Transition State Energy Calculations with Pretrained Graph Neural Networks},
    journal = {ACS Catal.},
    volume = {15},
    number = {7},
    pages = {5283-5294},
    year = {2025},
}

@article{yuan_analytical_2024,
	title = {Analytical ab initio hessian from a deep learning potential for transition state optimization},
	volume = {15},
	copyright = {2024 The Author(s)},
	issn = {2041-1723},
	doi_ = {10.1038/s41467-024-52481-5},
	language = {en},
	number = {1},
	urldate = {2024-12-14},
	journal = {Nat. Commun.},
	author = {Yuan, Eric C.-Y. and Kumar, Anup and Guan, Xingyi and Hermes, Eric D. and Rosen, Andrew S. and Zádor, Judit and Head-Gordon, Teresa and Blau, Samuel M.},
	month = oct,
	year = {2024},
	pages = {8865},
}

@article{galustian2026motsart,
  title={moTSart: Accelerating Automated Transition State Search with Generative Models in a Low-Data Regime},
  author={Galustian, Leonard and Karwounopoulos, Johannes and Demuth, Tori and De Landsheere, Jasper and Mark, Konstantin and Kovar, Maximilian P-P and Zamyatin, Anton and Svatunek, Dennis and Heid, Esther},
  journal = {ChemRxiv},
  year={2026},
  volume = {2026},
  number = {0417},
}

@article{kim2024diffusion,
  title={Diffusion-based generative AI for exploring transition states from 2D molecular graphs},
  author={Kim, Seonghwan and Woo, Jeheon and Kim, Woo Youn},
  journal={Nature Communications},
  volume={15},
  number={1},
  pages={341},
  year={2024},
  publisher={Nature Publishing Group UK London}
}

@article{galustian2025goflow,
  title={GoFlow: efficient transition state geometry prediction with flow matching and E (3)-equivariant neural networks},
  author={Galustian, Leonard and Mark, Konstantin and Karwounopoulos, Johannes and Kovar, Maximilian P-P and Heid, Esther},
  journal={Digital Discovery},
  volume={4},
  number={12},
  pages={3492--3501},
  year={2025},
  publisher={Royal Society of Chemistry}
}

@article{duan2023accurate,
  title={Accurate transition state generation with an object-aware equivariant elementary reaction diffusion model},
  author={Duan, Chenru and Du, Yuanqi and Jia, Haojun and Kulik, Heather J},
  journal={Nature computational science},
  volume={3},
  number={12},
  pages={1045--1055},
  year={2023},
  publisher={Nature Publishing Group US New York}
}

@article{darouich2026adaptive,
  title={Adaptive Transition-State Refinement with Learned Equilibrium Flows},
  author={Darouich, Samir and Tong, Vinh and Bien, Tanja and Kästner, Johannes and Niepert, Mathias},
  journal={Journal of Chemical Information and Modeling},
  volume={66},
  number={4},
  pages={2154--2165},
  year={2026},
  publisher={ACS Publications}
}

@article{pattanaik2020generating,
  title={Generating transition states of isomerization reactions with deep learning},
  author={Pattanaik, Lagnajit and Ingraham, John B and Grambow, Colin A and Green, William H},
  journal={Physical Chemistry Chemical Physics},
  volume={22},
  number={41},
  pages={23618--23626},
  year={2020},
  publisher={Royal Society of Chemistry}
}

@article{heid2021machine,
  title={Machine learning of reaction properties via learned representations of the condensed graph of reaction},
  author={Heid, Esther and Green, William H},
  journal={Journal of Chemical Information and Modeling},
  volume={62},
  number={9},
  pages={2101--2110},
  year={2021},
  publisher={ACS Publications}
}

@inproceedings{joshi2023expressive,
  title={On the Expressive Power of Geometric Graph Neural Networks},
  author={Joshi, Chaitanya K. and Bodnar, Cristian and Mathis, Simon V. and Cohen, Taco and Li{\`o}, Pietro},
  booktitle={International Conference on Machine Learning},
  year={2023}
}

@inproceedings{hordan2024weisfeiler,
  title={Weisfeiler Leman for Euclidean Equivariant Machine Learning},
  author={Hordan, Snir and Amir, Tal and Dym, Nadav},
  booktitle={International Conference on Machine Learning},
  year={2024}
}

@inproceedings{sverdlov2025sparse,
  title={On the Expressive Power of Sparse Geometric MPNNs},
  author={Sverdlov, Yonatan and Dym, Nadav},
  booktitle={International Conference on Learning Representations},
  year={2025}
}

@article{kabsch1976solution,
  title={A solution for the best rotation to relate two sets of vectors},
  author={Kabsch, Wolfgang},
  journal={Foundations of Crystallography},
  volume={32},
  number={5},
  pages={922--923},
  year={1976},
  publisher={International Union of Crystallography}
}

@article{kuhn1955hungarian,
  title={The Hungarian method for the assignment problem},
  author={Kuhn, Harold W},
  journal={Naval research logistics quarterly},
  volume={2},
  number={1-2},
  pages={83--97},
  year={1955},
  publisher={Wiley Online Library}
}

@article{abramson2024accurate,
  title={Accurate structure prediction of biomolecular interactions with AlphaFold 3},
  author={Abramson, Josh and Adler, Jonas and Dunger, Jack and Evans, Richard and Green, Tim and Pritzel, Alexander and Ronneberger, Olaf and Willmore, Lindsay and Ballard, Andrew J and Bambrick, Joshua and others},
  journal={Nature},
  volume={630},
  number={8016},
  pages={493--500},
  year={2024},
  publisher={Nature Publishing Group UK London}
}

@article{geffner2025proteina,
  title={Proteina: Scaling flow-based protein structure generative models},
  author={Geffner, Tomas and Didi, Kieran and Zhang, Zuobai and Reidenbach, Danny and Cao, Zhonglin and Yim, Jason and Geiger, Mario and Dallago, Christian and Kucukbenli, Emine and Vahdat, Arash and others},
  journal={arXiv preprint arXiv:2503.00710},
  year={2025}
}







\appendix




\newpage

\begin{center}
    \hrule 
    \startcontents[sections]\vbox{\vspace{4mm}\sc \LARGE SymDrift: One-Shot Generative Modeling under Symmetries \\\sc\small \textbf{Additional Material}} \vspace{5mm} \hrule height .5pt
    \printcontents[sections]{l}{0}{\setcounter{tocdepth}{2}}
\end{center}

\newpage

\section{Extented Background}
\label{sec:extended_background}

\paragraph{Invariance and Equivariance.}  
A function $f: \Omega \to \mathbb{R}$ is $\mathcal{G}$-invariant if its output does not change under any group action:
\begin{equation}
    f(g \cdot x) = f(x), \quad \forall g \in \mathcal{G}, x \in \Omega.
\end{equation}
A function $f: \Omega \to \Omega$ is $\mathcal{G}$-equivariant if it commutes with the group action:
\begin{equation}
    f(g \cdot x) = g \cdot f(x), \quad \forall g \in \mathcal{G}, x \in \Omega.
\end{equation}
Equivariance ensures that transformations of the input propagate consistently to the output. A probability distribution $p(x)$ on $\Omega$ is $\mathcal{G}$-invariant if it is unchanged under any group action:
\begin{equation}
p(g \cdot x \in A) = p(x \in A), \quad \forall g \in \mathcal{G}, \text{ measurable } A \subseteq \Omega.
\end{equation}
Invariant distributions are essential for designing models that respect the symmetries of the underlying data.

\textbf{Group Symmetrization.}

Let $\mathcal{S}_G$ denote the symmetrization operator associated with a group $G$, which maps any distribution $p(\mathbf{x})$ to a $G$-invariant distribution, referred to as the $G$-symmetrized distribution:
\begin{equation}
    \mathcal{S}_G[p](\mathbf{x})
    :=
    \int_G p(g \cdot \mathbf{x}) \, \mathrm{d}\mu_G(g)
\end{equation}
where $\mu_G$ is the Haar measure on $G$. The Haar measure is a measure defined on a locally compact topological group $G$ that is invariant under group translations, i.e., $\mu_G(gA) = \mu_G(A)$ for all measurable sets $A \subseteq G$ and all $g \in G$. It is unique up to a constant scaling factor.

\section{Theoretical Proofs}
\label{sec:proofs}

\subsection{Gradient definition}
\label{sec:gradient_def}

We begin with the definition of the drifting loss:
\begin{equation}
    \mathcal{L} = \mathbb{E}_{{\boldsymbol{\epsilon}} \sim  p_\epsilon} \left[ \ell(f_{\theta}(\boldsymbol\epsilon)) \right]
\end{equation}
where the point-wise loss function is defined as:
\begin{equation}
    \ell(\mathbf{x}) = \left\| \mathbf{x} - \text{stopgrad} \left( \mathbf{x} + \mathbf{V}_{p,q_{\theta}}(\mathbf{x}) \right) \right\|^{2}
\end{equation}
with $\mathbf{x} = f_\theta(\boldsymbol{\epsilon})$ as the model prediction. By the multivariate chain rule, the gradient of the loss with respect to the parameters $\theta$ is the product of the Jacobian of the generator and the gradient of $\ell$ with respect to the prediction:
\begin{equation}
    \nabla_\theta\mathcal{L} = \mathbb{E}_{{\boldsymbol{\epsilon}}} \left[ \left( \nabla_\theta f_\theta(\boldsymbol{\epsilon}) \right)^\top \nabla_{\mathbf{x}} \ell(\mathbf{x}) \right]
\end{equation}
To compute $\nabla_{\mathbf{x}} \ell(\mathbf{x})$, we denote the target term as $\mathbf{y}_{\text{target}} = \text{sg} (\mathbf{x} + \mathbf{V}_{p,q_{\theta}}(\mathbf{x}))$. Since the \texttt{sg} operator ensures $\nabla_{\mathbf{x}} \mathbf{y}_{\text{target}} = \mathbf{0}$, we have:
\begin{align}
    \nabla_{\mathbf{x}} \ell(\mathbf{x}) &= \nabla_{\mathbf{x}} \left\| \mathbf{x} - \mathbf{y}_{\text{target}} \right\|^{2} \\
    &= 2 (\mathbf{x} - \mathbf{y}_{\text{target}})
\end{align}
Evaluating the term inside the parentheses using the forward-pass value of $\mathbf{y}_{\text{target}}$:
\begin{align}
    \mathbf{x} - \mathbf{y}_{\text{target}} &= \mathbf{x} - (\mathbf{x} + \mathbf{V}_{p,q_{\theta}}(\mathbf{x})) \\
    &= - \mathbf{V}_{p,q_{\theta}}(\mathbf{x})
\end{align}
Substituting $\nabla_{\mathbf{x}} \ell(\mathbf{x}) = -2 \mathbf{V}_{p,q_{\theta}}(\mathbf{x})$ and $\mathbf{V}_{p,q_{\theta}}(\mathbf{x}) = \mathbf{V}_{p}^{+}(\mathbf{x}) - \mathbf{V}_{q_\theta}^{-}(\mathbf{x})$ back into the chain rule expression, we obtain the final gradient estimator:
\begin{equation}
    \label{eq:loss_gradient}
    \nabla_\theta\mathcal{L} = \mathbb{E}_{{\boldsymbol{\epsilon}} \sim  p_\epsilon} \left[ -2\left(\nabla_{\theta} f_\theta(\boldsymbol{\epsilon}) \right)^\top \left( \mathbf{V}_{p}^{+}(\mathbf{x}) - \mathbf{V}_{q_\theta}^{-}(\mathbf{x}) \right) \right]
\end{equation}

In the following, we derive the gradient from Equation~\ref{eq:conflict_grad}:
\begin{align}
    \label{eq:g_dependent_grad}
    \nabla_\theta\mathcal{L} &= \mathbb{E}_{{\boldsymbol{\epsilon}} \sim  p_\epsilon} \left[ -2\left(\nabla_{\theta} f_\theta(\boldsymbol{\epsilon}) \right)^\top \left( \mathbf{V}_{p}^{+}(\mathbf{x}) - \mathbf{V}_{q_\theta}^{-}(\mathbf{x}) \right) \right] \\
    &= \mathbb{E}_{\boldsymbol{\epsilon} \sim p_{\epsilon}} \mathbb{E}_{g \sim G} \left[ -2 \bigl(\nabla_\theta f_\theta(g\boldsymbol{\epsilon})\bigr)^\top \bigl(\mathbf{V}^+_p(g\mathbf{x}) - \mathbf{V}^-_{q_\theta}(g\mathbf{x})\bigr) \right] \quad G\text{-invariance of } p_\epsilon \, (p_\epsilon(\boldsymbol{\epsilon}) = p_\epsilon(g\boldsymbol{\epsilon})) \\
    &= \mathbb{E}_{\boldsymbol{\epsilon} \sim p_{\epsilon}} \mathbb{E}_{g \sim G} \left[ -2 \bigl(g\nabla_\theta f_\theta(\boldsymbol{\epsilon})\bigr)^\top \bigl(\mathbf{V}^+_p(g\mathbf{x}) - g\mathbf{V}^-_{q_\theta}(\mathbf{x})\bigr) \right] \quad \text{equvariance of } f_\theta \, (f_\theta(g\boldsymbol{\epsilon})=gf_\theta(\boldsymbol{\epsilon}))\\
    &= \mathbb{E}_{\boldsymbol{\epsilon} \sim p_{\epsilon}} \mathbb{E}_{g \sim G} \left[ -2 \bigl(\nabla_\theta f_\theta(\boldsymbol{\epsilon})\bigr)^\top g^{-1}\bigl(\mathbf{V}^+_p(g\mathbf{x}) - g\mathbf{V}^-_{q_\theta}(\mathbf{x})\bigr) \right] \quad \text{orthogonality of } g \, (g^\top = g^{-1})\\
    &=\mathbb{E}_{\boldsymbol{\epsilon} \sim p_{\epsilon}} \Big[
    -2\, \bigl(\nabla_\theta f_\theta(\boldsymbol{\epsilon})\bigr)^\top
    \big(\mathbb{E}_{g \sim G} \bigl[g^{-1} \mathbf{V}^+_p(g\mathbf{x})\bigr]
    - \mathbf{V}^-_{q_\theta}(\mathbf{x})\big)
    \Big] \quad g \text{ independence of } \mathbf{V}^{-}_{q_\theta} (g^{-1}g=I)
\end{align}

\subsection{Main Theorem}
\label{appendix: main_thm}

\begingroup
\setcounter{theorem}{0} 
\begin{tcolorbox}[title=Drift field mismatch under symmetrization]
\begin{theorem}
Let \(G\) be a compact orthogonal group on \(\mathbb{R}^d\) with Haar measure, and define
\[
\mathbf{V}_p^+(\mathbf{x})
=
\frac{\mathbb{E}_{\mathbf{y}^+ \sim p}\!\left[k(\mathbf{x},\mathbf{y}^+)(\mathbf{y}^+-\mathbf{x})\right]}
     {\mathbb{E}_{\mathbf{y}^+ \sim p}\!\left[k(\mathbf{x},\mathbf{y}^+)\right]},
\]
where 
\(
k(\mathbf{x},\mathbf{y}^+)=\exp\!\left(-\frac{\|\mathbf{x}-\mathbf{y}^+\|}{\tau}\right)
\)
is the Laplacian kernel.

Let \(p^G(\mathbf{x})=\mathbb{E}_{g\sim G}[p(g^{-1}\mathbf{x})]\) and
\(
\hat{\mathbf{V}}_{p}^+(\mathbf{x})=\mathbb{E}_{g\sim G}\!\left[g^{-1}\mathbf{V}_p^+(g\mathbf{x})\right].
\)
Then there exist a non-\(G\)-invariant \(p\) and \(\mathbf{x} \in \mathbb{R}^d\) such that
\(
\hat{\mathbf{V}}_{p}^+(\mathbf{x}) \neq \mathbf{V}_{p^G}^+(\mathbf{x}).
\)
\end{theorem}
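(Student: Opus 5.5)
The plan is to prove the statement by an explicit counterexample in the smallest nontrivial setting. Take $d=1$ and $G=\{+1,-1\}$ acting on $\mathbb{R}$ by multiplication. This is a compact (finite) orthogonal group, and its Haar measure is the uniform measure assigning weight $\tfrac12$ to each element. Let $p=\delta_a$ be a point mass at some $a>0$. Since $\delta_a\neq\delta_{-a}$, the distribution $p$ is not $G$-invariant, and its symmetrization is $p^G=\tfrac12\delta_a+\tfrac12\delta_{-a}$.

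First I compute the aggregated field $\hat{\mathbf{V}}_p^+$. For a point mass, the normalized kernel average is trivial, so $\mathbf{V}_p^+(x)=a-x$ for every $x$, independently of $\tau$. Hence
\[
\hat{\mathbf{V}}_p^+(x)=\tfrac12\bigl[(a-x)+(-1)\bigl(a-(-x)\bigr)\bigr]=-x .
\]

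Next I compute the symmetrized field. Write $w_\pm(x)=\exp(-|x\mp a|/\tau)$ for the two kernel weights. Then
\[
\mathbf{V}_{p^G}^+(x)=\frac{w_+(x)(a-x)+w_-(x)(-a-x)}{w_+(x)+w_-(x)}=a\,\frac{w_+(x)-w_-(x)}{w_+(x)+w_-(x)}-x .
\]
Evaluating at $x=a$ gives $w_+=1$ and $w_-=e^{-2a/\tau}$, so
\[
\mathbf{V}_{p^G}^+(a)=a\tanh(a/\tau)-a .
\]
Since $a>0$ and $\tau>0$ imply $\tanh(a/\tau)>0$, this differs from $\hat{\mathbf{V}}_p^+(a)=-a$, which proves the theorem.

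The conceptual reason is that $\hat{\mathbf{V}}_p^+$ averages the per-orientation normalized drifts. The true symmetrized drift instead normalizes jointly over all group-transformed targets, so the nearby copy $a$ receives more weight than the far copy $-a$.

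The only point needing care is checking that this example satisfies the hypotheses. $G$ must be a compact orthogonal group with its Haar measure, which holds for $\{\pm1\}$ with the uniform measure. $p$ must be a probability distribution, which holds if point masses are allowed; if a density is required, I would replace $\delta_a$ by a narrow bump around $a$ and conclude by continuity in the width. If a higher-dimensional or $SO(3)$-type example is preferred, the same computation embeds in $\mathbb{R}^d$ by taking $G$ generated by a reflection (or by a rotation by $\pi$ in a plane) and $a$ a vector moved by it. I expect no real obstacle beyond this verification.
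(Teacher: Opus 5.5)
Your counterexample is correct. It rests on the same idea as the paper's example: with $p$ a point mass, $\mathbf{V}_p^+(\mathbf{x})=\mathbf{y}^+-\mathbf{x}$, so orbit-averaging collapses $\hat{\mathbf{V}}_p^+$ to $-\mathbf{x}$, whereas $\mathbf{V}_{p^G}^+$ is a kernel-weighted orbit average that favours the nearer copies.

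The routes differ in the choice of group and in how the second field is evaluated. The paper takes $G=\mathrm{SO}(3)$ acting on $\mathbb{R}^{N\times 3}$. It writes both fields in general form, $\hat{\mathbf{V}}_p^+$ as an average of ratios and $\mathbf{V}_{p^G}^+$ as a ratio of averages. It then evaluates $\mathbf{V}_{p^G}^+$ only in the limit $\tau\to 0$, assuming a unique optimal alignment $g^*$, and obtains $g^*\mathbf{y}^+-\mathbf{x}\neq-\mathbf{x}$. Strictly, this gives the inequality only for sufficiently small $\tau$, or, at fixed $\tau$, after rescaling the configuration.

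Your choice $G=\{\pm1\}$ with evaluation point $x=a$ gives the closed form $a\tanh(a/\tau)-a\neq-a$ for every $\tau>0$. It needs no limit and no uniqueness assumption, so it is the cleaner and fully rigorous version. The paper's version, in exchange, sits in the molecular setting and directly motivates the hard-alignment picture behind its Method 1.

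Both arguments pick the group. If you want the statement for an arbitrary nontrivial compact orthogonal $G$, your evaluation at $x=a$ generalizes. Take $p=\delta_a$ with $a$ not fixed by $G$. Then $\hat{\mathbf{V}}_p^+(a)=\bar a-a$ with $\bar a=\mathbb{E}_g[ga]$. Meanwhile $\mathbf{V}_{p^G}^+(a)+a$ is the mean of $ga$ under the weights $e^{-\|a-ga\|/\tau}$, which are strictly increasing in $\langle a,ga\rangle$. Since $g\mapsto\langle a,ga\rangle$ is not Haar-almost surely constant, Chebyshev's integral inequality gives $\langle a,\mathbf{V}_{p^G}^+(a)+a\rangle>\langle a,\bar a\rangle$, so the two fields differ at $a$.
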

\end{tcolorbox}

\begin{proof}
We derive explicit expressions for both sides. We divide the proof into three steps:
\begin{enumerate}
    \item Expression for $\mathbf{V}_{p^G}^+(\mathbf{x})$.
    \item Expression for $\hat{\mathbf{V}}_{p}^+(\mathbf{x})$.
    \item Comparison.
\end{enumerate}

\paragraph{Step 1: Symmetrized distribution drift.}

By definition,
\[
\mathbf{V}_{p^G}^+(\mathbf{x})
=
\frac{
\mathbb{E}_{\mathbf{y}^+ \sim p^G}
\bigl[
k(\mathbf{x},\mathbf{y}^+)(\mathbf{y}^+-\mathbf{x})
\bigr]
}{
\mathbb{E}_{\mathbf{y}^+ \sim p^G}
\bigl[
k(\mathbf{x},\mathbf{y}^+)
\bigr]
}.
\]

Using $p^G(\mathbf{y}) = \mathbb{E}_{g \sim G}[p(g^{-1}\mathbf{y})]$, we rewrite the numerator:
\[
A_{p^G}(\mathbf{x})
=
\mathbb{E}_{\mathbf{y}^+ \sim p}\mathbb{E}_{g \sim G}
\bigl[
k(\mathbf{x}, g\mathbf{y}^+)(g\mathbf{y}^+-\mathbf{x})
\bigr] 
= 
\mathbb{E}_{g \sim G} \mathbb{E}_{\mathbf{y}^+ \sim p}
\bigl[
k(\mathbf{x}, g\mathbf{y}^+)(g\mathbf{y}^+-\mathbf{x})
\bigr]
\]

Using orthogonality of $G$, we have
\(
k(\mathbf{x}, g\mathbf{y}) = k(g^{-1}\mathbf{x}, \mathbf{y}),
\)
and we rewrite
\(
g\mathbf{y} - \mathbf{x}
=
g(\mathbf{y} - g^{-1}\mathbf{x}).
\)

Hence,
\[
A_{p^G}(\mathbf{x})
=
\mathbb{E}_{g \sim G}
\Big[
g \,
\mathbb{E}_{\mathbf{y} \sim p}
\bigl[
k(g^{-1}\mathbf{x}, \mathbf{y})(\mathbf{y}-g^{-1}\mathbf{x})
\bigr]
\Big]
=
\mathbb{E}_{g \sim G}
\bigl[
g\, A_p(g^{-1}\mathbf{x})
\bigr].
\]

Similarly,
\[
Z_{p^G}(\mathbf{x})
=
\mathbb{E}_{g \sim G}
\bigl[
Z_p(g^{-1}\mathbf{x})
\bigr].
\]

Thus,
\[
\mathbf{V}_{p^G}^+(\mathbf{x})
=
\frac{
\mathbb{E}_{g \sim G}
\bigl[
g\, A_p(g^{-1}\mathbf{x})
\bigr]
}{
\mathbb{E}_{g \sim G}
\bigl[
Z_p(g^{-1}\mathbf{x})
\bigr]
}.
\]

\paragraph{Step 2: Aggregated drifting field.}

By definition,
\[
\hat{\mathbf{V}}_{p}^+(\mathbf{x})
=
\mathbb{E}_{g \sim G}
\bigl[
g^{-1}\mathbf{V}_p^+(g\mathbf{x})
\bigr].
\]

Using $\mathbf{V}_p^+(\mathbf{x}) = \frac{A_p(\mathbf{x})}{Z_p(\mathbf{x})}$, we obtain
\[
\hat{\mathbf{V}}_{p}^+(\mathbf{x})
=
\mathbb{E}_{g \sim G}
\left[
g^{-1}
\frac{A_p(g\mathbf{x})}{Z_p(g\mathbf{x})}
\right]
=
\mathbb{E}_{g \sim G}
\left[
g
\frac{A_p(g^{-1}\mathbf{x})}{Z_p(g^{-1}\mathbf{x})}
\right].
\]

\paragraph{Step 3: Comparison.}

We compare
\[
\hat{\mathbf{V}}_{p}^+(\mathbf{x})
=
\mathbb{E}_{g \sim G}
\left[
g
\frac{A_p(g^{-1}\mathbf{x})}{Z_p(g^{-1}\mathbf{x})}
\right],
\]
and
\[
\mathbf{V}_{p^G}^+(\mathbf{x})
=
\frac{
\mathbb{E}_{g \sim G}
\bigl[
g\, A_p(g^{-1}\mathbf{x})
\bigr]
}{
\mathbb{E}_{g \sim G}
\bigl[
Z_p(g^{-1}\mathbf{x})
\bigr]
}.
\]

The difference is structural:
\begin{itemize}
\item $\hat{\mathbf{V}}$ averages the ratio $\frac{A_p}{Z_p}$,
\item $\mathbf{V}_{p^G}^+$ is the ratio of averaged numerator and denominator.
\end{itemize}

Since $(A,Z)\mapsto A/Z$ is nonlinear, these operations do not commute. Therefore, in general,
\[
\hat{\mathbf{V}}_{p}^+(\mathbf{x})
\neq
\mathbf{V}_{p^G}^+(\mathbf{x}).
\]

\paragraph{Example (vector-valued Laplace kernel on $\mathrm{SO}(3)$).}
Let $G = \mathrm{SO}(3)$ act on $\mathbb{R}^{N \times 3}$ by
$(g\mathbf{x})_i = g \mathbf{x}_i$ for $i=1,\dots,N$, and let
$\mathbf{y}^+ \in \mathbb{R}^{N \times 3}$ be a zero-centered structure.
Define
\[
A_p(\mathbf{x}) = \exp\bigl(-\|\mathbf{x} - \mathbf{y}^+\| / \tau\bigr)\,(\mathbf{y}^+ - \mathbf{x}),
\quad
Z_p(\mathbf{x}) = \exp\bigl(-\|\mathbf{x} - \mathbf{y}^+\| / \tau \bigr).
\]

Then we have
\[
\frac{A_p(\mathbf{x})}{Z_p(\mathbf{x})} = \frac{\exp\bigl(-\|\mathbf{x} - \mathbf{y}^+\| / \tau\bigr)\,(\mathbf{y}^+ - \mathbf{x})}{\exp\bigl(-\|\mathbf{x} - \mathbf{y}^+\| / \tau \bigr)} = (\mathbf{y}^+ - \mathbf{x}).
\]

Thus,
\[
\hat{\mathbf{V}}_{p}^+(\mathbf{x})
=
\mathbb{E}_{g \sim G}
\left[
g \frac{A_p(g^{-1}\mathbf{x})}{Z_p(g^{-1}\mathbf{x})}
\right]
=
\mathbb{E}_{g \sim G}
\bigl[
g(\mathbf{y}^+ - g^{-1}\mathbf{x})
\bigr] = -\mathbf{x}.
\]
(Note: This final equality holds because the integral of the coordinate functions over the Haar measure of $\mathrm{SO}(3)$ vanishes, meaning $\mathbb{E}_{g \sim G}[g\mathbf{y}^+] = \mathbf{0}$.)

On the other hand,
\[
\mathbf{V}_{p^G}^+(\mathbf{x})
=
\frac{
\mathbb{E}_{g}
\bigl[
g\, A_p(g^{-1}\mathbf{x})
\bigr]
}{
\mathbb{E}_{g}
\bigl[
Z_p(g^{-1}\mathbf{x})
\bigr]
}.
\]

We observe that $\hat{\mathbf{V}}_{p}^+(\mathbf{x}) = -\mathbf{x}$ is independent of both
$\mathbf{y}^+$ and the temperature $\tau$.

We now analyze $\mathbf{V}_{p^G}^+(\mathbf{x})$ in the limit $\tau \to 0$.
Define
\[
w_\tau(g) := \exp\bigl(-\|g^{-1}\mathbf{x} - \mathbf{y}^+\|/\tau\bigr).
\]
Then
\[
\mathbf{V}_{p^G}^+(\mathbf{x})
=
\frac{
\mathbb{E}_{g}\bigl[w_\tau(g)\, g(\mathbf{y}^+ - g^{-1}\mathbf{x})\bigr]
}{
\mathbb{E}_{g}\bigl[w_\tau(g)\bigr]
}.
\]

As $\tau \to 0$, the weights $w_\tau(g)$ concentrate on the set
\[
\mathcal{G}^* := \arg\min_{g \in G} \|g^{-1}\mathbf{x} - \mathbf{y}^+\|.
\]
Assume for simplicity that the minimizer is unique, i.e.,
\[
g^* = \arg\min_{g \in G} \|g^{-1}\mathbf{x} - \mathbf{y}^+\|.
\]
Then $w_\tau(g)$ concentrates at $g^*$, and we obtain
\[
\lim_{\tau \to 0} \mathbf{V}_{p^G}^+(\mathbf{x})
=
g^* \bigl(\mathbf{y}^+ - (g^*)^{-1}\mathbf{x}\bigr)
=
g^*\mathbf{y}^+ - \mathbf{x}.
\]

In general, $g^*\mathbf{y}^+ \neq \mathbf{0}$, hence
\[
\lim_{\tau \to 0} \mathbf{V}_{p^G}^+(\mathbf{x})
\neq -\mathbf{x}
=
\hat{\mathbf{V}}_{p}^+(\mathbf{x}).
\]

Therefore, $\hat{\mathbf{V}}_{p}^+(\mathbf{x}) \neq \mathbf{V}_{p^G}^+(\mathbf{x})$
in general.

This concludes the proof.

\end{proof}

\subsection{Equivariant Drifting Fields from Invariant Distributions}

\label{proof_property1}

\begin{tcolorbox}[title=Equivariance of Kernel Drift under Invariant Distributions]
\textbf{Property 1.}
Let \(G\) be a compact group acting orthogonally on \(\mathbb{R}^d\) with
normalized Haar measure, and assume the kernel satisfies
\(k(g\mathbf{x}, g\mathbf{y}) = k(\mathbf{x}, \mathbf{y})\).
If a distribution \(q\) is \(G\)-invariant, then the kernel drift satisfies
\[
\mathbf{V}^{-}_{q}(g\mathbf{x}) = g\,\mathbf{V}^{-}_{q}(\mathbf{x}),
\qquad \forall g \in G, \mathbf{x} \in \mathbb{R}^d.
\]
\end{tcolorbox}

\begin{proof}
We start from the definition of \(\mathbf{V}^{-}_{q}(g\mathbf{x})\):
\[
\mathbf{V}^{-}_{q}(g\mathbf{x})
=
\frac{1}{Z_q(g\mathbf{x})}
\mathbb{E}_{\mathbf{y}^- \sim q}
\left[
k(g\mathbf{x}, \mathbf{y}^-)\,(\mathbf{y}^- - g\mathbf{x})
\right].
\]

Since \(q\) is \(G\)-invariant, we can write \(\mathbf{y}^- = g\mathbf{z}^-\) with \(\mathbf{z}^- \sim q\), giving
\[
\mathbf{V}^{-}_{q}(g\mathbf{x})
=
\frac{1}{Z_q(g\mathbf{x})}
\mathbb{E}_{\mathbf{z}^- \sim q}
\left[
k(g\mathbf{x}, g\mathbf{z}^-)\,(g\mathbf{z}^- - g\mathbf{x})
\right].
\]

Using kernel invariance \(k(g\mathbf{x}, g\mathbf{z}^-) = k(\mathbf{x}, \mathbf{z}^-)\) and linearity of the group action \(g\mathbf{z}^- - g\mathbf{x} = g(\mathbf{z}^- - \mathbf{x})\), we obtain
\[
\mathbf{V}^{-}_{q}(g\mathbf{x})
=
\frac{1}{Z_q(g\mathbf{x})}
\mathbb{E}_{\mathbf{z}^- \sim q}
\left[
k(\mathbf{x}, \mathbf{z}^-)\, g(\mathbf{z}^- - \mathbf{x})
\right].
\]

Since \(g\) is linear, it can be factored out of the expectation:
\[
\mathbf{V}^{-}_{q}(g\mathbf{x})
=
g \cdot
\frac{1}{Z_q(g\mathbf{x})}
\mathbb{E}_{\mathbf{z}^- \sim q}
\left[
k(\mathbf{x}, \mathbf{z}^-)\,(\mathbf{z}^- - \mathbf{x})
\right].
\]

Finally, note that \(Z_q(g\mathbf{x}) = Z_q(\mathbf{x})\) by the same change of variables, hence
\[
\mathbf{V}^{-}_{q}(g\mathbf{x}) = g\,\mathbf{V}^{-}_{q}(\mathbf{x}).
\]
\end{proof}

\subsection{Hard Alignment Limit of Kernel Averaging}

\label{hard_alignment_lim}

\begin{tcolorbox}[title=Hard Alignment Limit of Kernel Averaging]
\textbf{Property 2.} Let $p$ be a probability distribution on $\mathbb{R}^d$ with a compact support $\mathrm{supp}(p)$. Define the distance from $\mathbf{x}$ to the support as $d^* = \min_{\mathbf{y} \in \mathrm{supp}(p)} \|\mathbf{x}-\mathbf{y}\|$. 

Assume there is a unique nearest neighbor $\mathbf{y}^*(\mathbf{x}) \in \mathrm{supp}(p)$ such that $\|\mathbf{x} - \mathbf{y}^*(\mathbf{x})\| = d^*$. 

Define the normalized kernel weights using the Laplacian kernel $k(\mathbf{x},\mathbf{y}) = \exp(-\|\mathbf{x}-\mathbf{y}\|/\tau)$:
\[
\tilde{k}_\tau(\mathbf{x},\mathbf{y}) = \frac{k(\mathbf{x},\mathbf{y})}{Z_p}, \quad \text{where} \quad Z_p = \mathbb{E}_{\mathbf{y}' \sim p}\left[k(\mathbf{x},\mathbf{y}')\right]
\]

Then, as $\tau \to 0$, the expectation of $\mathbf{y}$ under these normalized weights converges to the nearest neighbor:
\[
\lim_{\tau \to 0} \mathbb{E}_{\mathbf{y} \sim p}\left[\tilde{k}_\tau(\mathbf{x},\mathbf{y})\,\mathbf{y}\right] = \mathbf{y}^*(\mathbf{x})
\]
\end{tcolorbox}

\begin{proof}
Let $\mathbf{m}_\tau(\mathbf{x}) = \mathbb{E}_{\mathbf{y} \sim p}\left[\tilde{k}_\tau(\mathbf{x},\mathbf{y})\,\mathbf{y}\right]$. We want to show that for any given $\epsilon > 0$, the probability mass induced by $\tilde{k}_\tau$ concentrates entirely within the open ball $B_\epsilon(\mathbf{y}^*) = \{\mathbf{y} \in \mathbb{R}^d : \|\mathbf{y} - \mathbf{y}^*\| < \epsilon\}$ as $\tau \to 0$.

Let $A_\epsilon = \mathrm{supp}(p) \setminus B_\epsilon(\mathbf{y}^*)$ be the set of points outside this $\epsilon$-neighborhood. Because $\mathrm{supp}(p)$ is compact and $\mathbf{y}^*$ is the unique nearest neighbor, the minimum distance from $\mathbf{x}$ to any point in $A_\epsilon$ is strictly greater than $d^*$. Let this minimum distance be $d^* + \gamma$ for some $\gamma > 0$.

Now, let's bound the normalization constant $Z_p = \int_{\mathrm{supp}(p)} \exp(-\|\mathbf{x}-\mathbf{y}'\|/\tau) \, dp(\mathbf{y}')$. 
Take a smaller ball around $\mathbf{y}^*$ of radius $\delta$, where $0 < \delta < \gamma$. By the triangle inequality, for any $\mathbf{y} \in B_\delta(\mathbf{y}^*)$, we have $\|\mathbf{x}-\mathbf{y}\| \leq d^* + \delta$. Since $\mathbf{y}^* \in \mathrm{supp}(p)$, the probability measure of this ball under $p$ is strictly positive: $\int_{B_\delta(\mathbf{y}^*)} dp(\mathbf{y}) > 0$.

We can lower-bound $Z_p$ by only integrating over $B_\delta(\mathbf{y}^*)$:
\[
Z_p \geq \int_{B_\delta(\mathbf{y}^*)} \exp(-\|\mathbf{x}-\mathbf{y}'\|/\tau) \, dp(\mathbf{y}') \geq \exp\left(-\frac{d^* + \delta}{\tau}\right) \int_{B_\delta(\mathbf{y}^*)} dp(\mathbf{y})
\]

Next, we upper-bound the expectation over the ``far'' set $A_\epsilon$. Since every point in $A_\epsilon$ is at least $d^* + \gamma$ away from $\mathbf{x}$:
\[
\int_{A_\epsilon} \exp(-\|\mathbf{x}-\mathbf{y}\|/\tau) \, dp(\mathbf{y}) \leq \exp\left(-\frac{d^* + \gamma}{\tau}\right) \int_{A_\epsilon} dp(\mathbf{y}) \leq \exp\left(-\frac{d^* + \gamma}{\tau}\right)
\]

Now, consider the proportion of the weight assigned to the far set $A_\epsilon$:
\[
\mathbb{P}_{\tilde{k}_\tau}(A_\epsilon) = \frac{1}{Z_p} \int_{A_\epsilon} \exp(-\|\mathbf{x}-\mathbf{y}\|/\tau) \, dp(\mathbf{y})
\]
\[
\mathbb{P}_{\tilde{k}_\tau}(A_\epsilon) \leq \frac{\exp(-(d^* + \gamma)/\tau)}{\exp(-(d^* + \delta)/\tau) \int_{B_\delta(\mathbf{y}^*)} dp(\mathbf{y})} = \frac{\exp(-(\gamma - \delta)/\tau)}{\int_{B_\delta(\mathbf{y}^*)} dp(\mathbf{y})}
\]

Since we chose $\delta < \gamma$, the exponent $-(\gamma - \delta)/\tau$ is strictly negative. As $\tau \to 0$, the numerator goes to $0$ while the denominator $\int_{B_\delta(\mathbf{y}^*)} dp(\mathbf{y})$ is a fixed positive constant. Therefore, $\mathbb{P}_{\tilde{k}_\tau}(A_\epsilon) \to 0$.

This implies that as $\tau \to 0$, the probability measure induced by the normalized weights $\tilde{k}_\tau$ converges weakly to the Dirac measure $\delta_{\mathbf{y}^*}$ centered at $\mathbf{y}^*(\mathbf{x})$. Since the support is compact (ensuring uniform integrability), the expectation converges to the point of concentration:
\[
\lim_{\tau \to 0} \mathbf{m}_\tau(\mathbf{x}) = \mathbf{y}^*(\mathbf{x})
\]
\end{proof}

\subsection{Property of Embedding variant}

\label{sec:injective_embedding}

\setcounter{proposition}{0}

\begin{tcolorbox}[title=Invariance and Regularity]
\begin{proposition}
Let \(\mathcal{X}_N = \{\mathbf{x} \in \mathbb{R}^{N\times 3} : \sum_i x_i = 0\}\), where each \(x_i\) has type \(t_i \in \mathcal{T}\). For \(\alpha \in \mathcal{T}_{\mathrm{pairs}}\), define
\[
\mathcal{D}_\alpha(\mathbf{x}) = \{ \|x_i - x_j\| : \{t_i,t_j\}=\alpha \},
\quad
\phi(\mathbf{x}) = \bigoplus_\alpha \operatorname{sort}(\mathcal{D}_\alpha(\mathbf{x})).
\]

Let \(G = SO(3)\times \hat S_N\). Then:

\textbf{(i) Invariance.} \(\phi\) is \(G\)-invariant:
\[
\phi(\mathbf{x}) = \phi(g \cdot \mathbf{x}), \quad \forall g \in G.
\]

\textbf{(ii) Regularity.} \(\phi\) is piecewise smooth and differentiable almost everywhere, with non-differentiability only when pairwise distances within a type class coincide.
\end{proposition}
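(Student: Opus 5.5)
For part (i), I will treat the two factors of $G = SO(3)\times\hat S_N$ separately and then compose. I take $\hat S_N$ to be the type-preserving permutations, i.e.\ those $\pi$ with $t_{\pi(i)} = t_i$.

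For a rotation $R\in SO(3)$ acting by $(R\mathbf{x})_i = Rx_i$, I use $\|Rx_i - Rx_j\| = \|R(x_i-x_j)\| = \|x_i-x_j\|$. Types are untouched, so each multiset $\mathcal{D}_\alpha$ is unchanged entry by entry.

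For $\pi\in\hat S_N$ acting by $(\pi\mathbf{x})_i = x_{\pi^{-1}(i)}$, I observe that the induced map $\{i,j\}\mapsto\{\pi^{-1}(i),\pi^{-1}(j)\}$ is a bijection on unordered pairs. Because $\pi$ preserves types, it preserves the interaction type $\{t_i,t_j\}$, so it restricts to a bijection on the pairs of each class $\alpha$. Hence $\mathcal{D}_\alpha(\pi\mathbf{x})$ equals $\mathcal{D}_\alpha(\mathbf{x})$ as a multiset, and sorting a multiset gives a canonical vector. A general $g=(R,\pi)$ is a composition of the two actions, which commute, so invariance follows. I also note that centering in $\mathcal{X}_N$ is preserved by both actions, so $G$ acts on $\mathcal{X}_N$.

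For part (ii), I write $\phi$ as the composition $\operatorname{sort}\circ d$. Here $d:\mathcal{X}_N\to\mathbb{R}^{M}$ collects all pairwise distances $d_{ij}(\mathbf{x}) = \|x_i-x_j\|$, grouped by class. Each $d_{ij}$ is smooth wherever $x_i\neq x_j$. The coordinatewise sort on each block is piecewise linear: on each open region where the entries of a block are pairwise distinct, it equals a fixed permutation matrix. So on the open set $U$ where all within-class distances are distinct and nonzero, $\phi$ locally equals $P\circ d$ for a fixed permutation $P$, and is therefore smooth there. This gives piecewise smoothness, with pieces indexed by orderings. Since sort is continuous and $1$-Lipschitz, $\phi$ is continuous across the pieces.

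It remains to show the complement of $U$ is Lebesgue-null. The complement is contained in the union, over pairs of same-class index pairs, of the zero sets of $h(\mathbf{x}) = \|x_i-x_j\|^2-\|x_k-x_l\|^2$, together with the coincidence sets $\{x_i=x_j\}$. Each $h$ is a nonzero polynomial on the linear space $\mathcal{X}_N$, so its zero set has measure zero. Nonvanishing requires exhibiting one centered configuration where the two distances differ, which is easy whenever $\{i,j\}\neq\{k,l\}$. A finite union of null sets is null, so $\phi$ is differentiable almost everywhere.

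The main obstacle is the precise wording ``non-differentiability only when pairwise distances coincide.'' Strictly, the argument shows differentiability off the coincidence set (plus atom collisions, which are themselves coincidences involving a zero distance). It does not show that $\phi$ actually fails to be differentiable at every tie. I will state the result in that direction: the set of possible non-differentiability is contained in the tie set. I will also remark that ties can be benign, for instance when two equal distances move identically.
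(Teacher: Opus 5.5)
Your proposal is correct and follows the paper's proof essentially step for step: rotation invariance of the norm plus a type-preserving bijection on within-class pairs for (i), and for (ii) smoothness of the distances off collisions, sorting as a locally fixed permutation off ties, and measure-zero tie sets, where your nonzero-polynomial argument on $\mathcal{X}_N$ is a slightly more careful version of the paper's codimension claim. One correction to your closing aside: ties are never benign here, because for $\{i,j\}\neq\{k,l\}$ the differentials of $d_{ij}$ and $d_{kl}$ on $\mathcal{X}_N$ differ at every non-collision point, so each within-class tie is a genuine kink (a max of $C^1$ functions with different gradients), and the non-differentiability set is therefore exactly the tie set together with the collision set.
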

\end{tcolorbox}

\begin{proof}

\textbf{(i) $G$-invariance.}

Let $g=(R,\pi)\in SO(3)\times \hat{S}_N$, where $R\in SO(3)$ and $\pi$ is a permutation that preserves atom types. For any pair $(i,j)$, we have
\[
g\cdot x_i = R x_{\pi(i)}, \qquad g\cdot x_j = R x_{\pi(j)}.
\]
Using rotational invariance of the Euclidean norm,
\[
\|R x_{\pi(i)} - R x_{\pi(j)}\| = \|x_{\pi(i)} - x_{\pi(j)}\|.
\]
Since $\pi$ preserves types, the mapping $(i,j)\mapsto (\pi(i),\pi(j))$ induces a bijection within each type class $\alpha \in \mathcal{T}_{\mathrm{pairs}}$, hence
\[
\mathcal{D}_\alpha(g\mathbf{x}) = \mathcal{D}_\alpha(\mathbf{x}).
\]
Finally, sorting is invariant under permutations of elements in a multiset, so
\[
\operatorname{sort}(\mathcal{D}_\alpha(g\mathbf{x})) = \operatorname{sort}(\mathcal{D}_\alpha(\mathbf{x})),
\]
and concatenating over $\alpha$ yields $\phi(g\mathbf{x})=\phi(\mathbf{x})$.

\vspace{0.5em}

\textbf{(ii) Regularity.}

Fix a pair $(i,j)$. The map
\[
\mathbf{x} \mapsto \|x_i - x_j\|
\]
is smooth on the open set $\{x_i \neq x_j\}$ and has smooth gradient
\[
\nabla_{x_i} \|x_i-x_j\| = \frac{x_i-x_j}{\|x_i-x_j\|}, \qquad 
\nabla_{x_j} \|x_i-x_j\| = -\frac{x_i-x_j}{\|x_i-x_j\|}.
\]

Thus, before sorting, all components of $\phi$ are smooth except on the collision sets $\{x_i=x_j\}$. The sorting operator is piecewise smooth: it is smooth on regions where all distances within each $\mathcal{D}_\alpha(\mathbf{x})$ are strictly ordered. Nonsmoothness occurs exactly on the boundaries where two distances coincide:
\[
\|x_i-x_j\| = \|x_k-x_\ell\| \quad \text{within the same type block } \alpha,
\]
or when $x_i=x_j$.

Each such condition defines an algebraic variety of codimension at least one in $\mathbb{R}^{N\times 3}$, hence a measure-zero set. Therefore, $\phi$ is differentiable almost everywhere and piecewise smooth.

\end{proof}

\subsection{Equivariance of Alignment-Based Drift}
\label{sec:equi_drift_alignment}

\begin{tcolorbox}[title=Equivariance of Alignment-Based Drift]
\begin{lemma}
\label{lem:alignment_equivariance}
Let $G = S(N') \times O(3)$ be the symmetry group, acting on a state $\mathbf{x}$ via $g \cdot \mathbf{x} = \Pi R \mathbf{x}$ for $g = (\Pi, R) \in G$. We define the optimally aligned target $g^*(\mathbf{x}, \mathbf{y})$ as the group element that minimizes the distance:
\begin{equation}
    g^*(\mathbf{x}, \mathbf{y}) = \arg\min_{g \in G} \left\| \mathbf{x} - g \cdot \mathbf{y} \right\|
\end{equation}
Assume the drifting field $\mathbf{V}^+_p(\mathbf{x})$ is constructed using an isotropic kernel $k$ over this minimal distance, directing $\mathbf{x}$ toward the optimally aligned targets:
\begin{equation}
    \mathbf{V}^+_p(\mathbf{x}) = \frac{\mathbb{E}_{\mathbf{y}^+ \sim p} \left[ k\left(\mathbf{x}, g^*(\mathbf{x}, \mathbf{y}^+) \cdot \mathbf{y}^+ \right) \left( g^*(\mathbf{x}, \mathbf{y}) \cdot \mathbf{y}^+ - \mathbf{x} \right) \right]}{\mathbb{E}_{\mathbf{y^{+}} \sim p}\left[k(\mathbf{x},g^*(\mathbf{x}, \mathbf{y}^+) \cdot \mathbf{y}^+)\right]} 
\end{equation}
Then the vector field $\mathbf{V}^+_p$ is strictly $G$-equivariant, i.e., $\mathbf{V}^+_p(h \cdot \mathbf{x}) = h \cdot \mathbf{V}^+_p(\mathbf{x})$ for any $h \in G$.
\end{lemma}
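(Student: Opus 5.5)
The plan is to show that the optimal aligner transforms covariantly, $g^*(h\cdot\mathbf{x},\mathbf{y}) = h\,g^*(\mathbf{x},\mathbf{y})$, and then substitute this into the numerator and denominator of $\mathbf{V}^+_p$. Throughout, I use that $G=S(N')\times O(3)$ acts linearly and orthogonally on $\mathbb{R}^{N'\times 3}$. Permutation matrices and orthogonal $3\times 3$ matrices commute in their action, since $\Pi$ acts on the atom index and $R$ on the coordinate index. Hence $\|h\cdot\mathbf{u}\|=\|\mathbf{u}\|$ and $h\cdot(\mathbf{u}-\mathbf{v}) = h\cdot\mathbf{u}-h\cdot\mathbf{v}$ for all $h\in G$. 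The isotropic kernel is a function of $\|\mathbf{x}-\mathbf{y}\|$ only, so $k(h\cdot\mathbf{x},h\cdot\mathbf{y})=k(\mathbf{x},\mathbf{y})$. I read the $g^*(\mathbf{x},\mathbf{y})$ appearing in the numerator as the same $g^*(\mathbf{x},\mathbf{y}^+)$ used in the kernel.

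\textbf{Step 1 (covariance of the minimizer).} For fixed $h\in G$, orthogonality gives
\[
\|h\cdot\mathbf{x}-g\cdot\mathbf{y}\| = \|\mathbf{x}-(h^{-1}g)\cdot\mathbf{y}\|.
\]
The substitution $g'=h^{-1}g$ is a bijection of $G$, so the minimizers over $g$ of the left side are exactly $h$ times the minimizers over $g'$ of $\|\mathbf{x}-g'\cdot\mathbf{y}\|$. Existence of a minimizer follows from compactness of $G$ and continuity of the objective. Under the implicit uniqueness assumption in the definition via $\arg\min$, this gives $g^*(h\cdot\mathbf{x},\mathbf{y})=h\,g^*(\mathbf{x},\mathbf{y})$. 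Consequently the aligned target satisfies
\[
g^*(h\cdot\mathbf{x},\mathbf{y})\cdot\mathbf{y}=h\cdot\bigl(g^*(\mathbf{x},\mathbf{y})\cdot\mathbf{y}\bigr).
\]

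\textbf{Step 2 (substitution).} Write $\tilde{\mathbf{y}}=g^*(\mathbf{x},\mathbf{y}^+)\cdot\mathbf{y}^+$. Evaluated at $h\cdot\mathbf{x}$, the kernel weight is $k(h\cdot\mathbf{x},h\cdot\tilde{\mathbf{y}})=k(\mathbf{x},\tilde{\mathbf{y}})$, so the denominator is unchanged. The displacement is $h\cdot\tilde{\mathbf{y}}-h\cdot\mathbf{x}=h\cdot(\tilde{\mathbf{y}}-\mathbf{x})$. Since $h$ is a fixed linear map, it can be pulled out of the expectation over $\mathbf{y}^+\sim p$, which yields $\mathbf{V}^+_p(h\cdot\mathbf{x})=h\cdot\mathbf{V}^+_p(\mathbf{x})$. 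No invariance of $p$ is needed: the alignment inside the integrand does the work that the change of variables did in Property 1.

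\textbf{Main obstacle.} The delicate point is that the $\arg\min$ may be non-unique, for example for symmetric $\mathbf{y}$ with nontrivial stabilizer, or at degenerate configurations. I would handle this in one of two ways. The first is to note that any two minimizers $g_1,g_2$ give the same distance $\|\mathbf{x}-g_i\cdot\mathbf{y}\|$, so the kernel weight is well defined, and then either assume uniqueness almost everywhere or fix an equivariant tie-breaking selection. The second is to reinterpret $g^*$ as averaging over the minimizer set $\mathcal{G}^*(\mathbf{x},\mathbf{y})$. Step 1 shows $\mathcal{G}^*(h\cdot\mathbf{x},\mathbf{y})=h\,\mathcal{G}^*(\mathbf{x},\mathbf{y})$, and the average of $g\cdot\mathbf{y}$ over this set then transforms by $h$ as well, provided the averaging uses a measure on the minimizer set that is carried to itself under left translation by $h$.
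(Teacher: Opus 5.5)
Your proposal is correct and follows the paper's proof essentially step for step. You derive the covariance $g^*(h\cdot\mathbf{x},\mathbf{y}) = h\,g^*(\mathbf{x},\mathbf{y})$ via the reparametrization $g'=h^{-1}g$ and orthogonality, then use invariance of the kernel weights, equivariance of the displacement, and pull the linear map $h$ out of the expectation. Your treatment of existence and non-uniqueness goes beyond the paper, which silently assumes a unique $\arg\min$. One caveat: an equivariant tie-breaking selection need not exist at configurations with nontrivial stabilizer. For example, at $\mathbf{x}=\mathbf{0}$ every $g$ is a minimizer, and equivariance under $R=-I\in O(3)$ forces $\mathbf{V}^+_p(\mathbf{0})=\mathbf{0}$; no selected target achieves this when $p=\delta_{\mathbf{y}}$ with $\mathbf{y}\neq\mathbf{0}$. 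So your averaging option is the robust one.
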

\end{tcolorbox}

We prove $G$-equivariance of $\mathbf{V}^+_p$, defined as:
\begin{equation}
    \mathbf{V}^+_p(\mathbf{x}) = \frac{\mathbb{E}_{\mathbf{y}^+ \sim p} \left[ k\left(\mathbf{x}, \mathbf{y}^+ \right) \left( \mathbf{y}^+ - \mathbf{x} \right) \right]}{\mathbb{E}_{\mathbf{y^{+}} \sim p}\left[k(\mathbf{x},\mathbf{y}^+)\right]} 
\end{equation}
the same argument gives $G$-equivariance of $\mathbf{V}^-_q$ with $q$ in place of $p$, and since $\mathbf{V}_{p,q} = \mathbf{V}^+_p - \mathbf{V}^-_q$ with $g$ acting linearly, $G$-equivariance of $\mathbf{V}_{p,q}$ follows.

\begin{proof}
To prove equivariance, we evaluate the drifting field at a transformed input $h \cdot \mathbf{x}$. We must first determine the optimal alignment $g^*(h \cdot \mathbf{x}, \mathbf{y})$ for this transformed state:
\begin{align}
    g^*(h \cdot \mathbf{x}, \mathbf{y}) &= \arg\min_{g \in G} \left\| h \cdot \mathbf{x} - g \cdot \mathbf{y} \right\| \\
    &= \arg\min_{g \in G} \left\| h \cdot \left( \mathbf{x} - h^{-1}g \cdot \mathbf{y} \right) \right\| 
\end{align}
Because the group action $h \in S(N') \times O(3)$ is distance-preserving (orthogonal), we can factor it out of the norm. Letting $g' = h^{-1}g$, the optimization over $g \in G$ is equivalent to an optimization over $g' \in G$:
\begin{equation}
    \arg\min_{g' \in G} \left\| \mathbf{x} - g' \cdot \mathbf{y} \right\| = g^*(\mathbf{x}, \mathbf{y})
\end{equation}
Thus, the optimal group element for the transformed input is left-multiplied by $h$:
\begin{equation}
    g^*(h \cdot \mathbf{x}, \mathbf{y}) = h \cdot g^*(\mathbf{x}, \mathbf{y})
\end{equation}
This gives us two critical properties. First, the minimal distance inside the kernel evaluates to the same value, rendering the kernel weights $G$-invariant:
\begin{align}
    \left\| h \cdot \mathbf{x} - g^*(h \cdot \mathbf{x}, \mathbf{y}) \cdot \mathbf{y} \right\| &= \left\| h \cdot \mathbf{x} - (h \cdot g^*(\mathbf{x}, \mathbf{y})) \cdot \mathbf{y} \right\| \\
    &= \left\| \mathbf{x} - g^*(\mathbf{x}, \mathbf{y}) \cdot \mathbf{y} \right\|
\end{align}
Second, the local difference vector shifts equivariantly:
\begin{align}
    g^*(h \cdot \mathbf{x}, \mathbf{y}) \cdot \mathbf{y} - h \cdot \mathbf{x} &= (h \cdot g^*(\mathbf{x}, \mathbf{y})) \cdot \mathbf{y} - h \cdot \mathbf{x} \\
    &= h \cdot \left( g^*(\mathbf{x}, \mathbf{y}) \cdot \mathbf{y} - \mathbf{x} \right)
\end{align}
Substituting these two properties into the definition of the drifting field $\mathbf{V}(h \cdot \mathbf{x})$:
\begin{align}
    \mathbf{V}^+_p(h \cdot \mathbf{x}) &= \frac{\mathbb{E}_{\mathbf{y}^+ \sim p} \left[ k\left(\mathbf{x}, g^*(\mathbf{x}, \mathbf{y}^+) \cdot \mathbf{y}^+ \right) h \cdot \left( g^*(\mathbf{x}, \mathbf{y}^+) \cdot \mathbf{y}^+ - \mathbf{x} \right) \right]}{\mathbb{E}_{\mathbf{y^{+}} \sim p}\left[k(\mathbf{x},g^*(\mathbf{x}, \mathbf{y}^+) \cdot \mathbf{y}^+)\right]} \\
    &= \frac{h \cdot \mathbb{E}_{\mathbf{y}^+ \sim p} \left[ k\left(\mathbf{x}, g^*(\mathbf{x}, \mathbf{y}^+) \cdot \mathbf{y}^+ \right) \left( g^*(\mathbf{x}, \mathbf{y}^+) \cdot \mathbf{y}^+ - \mathbf{x} \right) \right]}{\mathbb{E}_{\mathbf{y^{+}} \sim p}\left[k(\mathbf{x},g^*(\mathbf{x}, \mathbf{y}^+) \cdot \mathbf{y}^+)\right]}  \\
    &= h \cdot \mathbf{V}^+_p
\end{align}
This concludes the proof that aligning the target distribution to the current state sample-wise yields a strictly $G$-equivariant drifting field.
\end{proof}


\section{Implementation details}
\label{sec:implementation}

\subsection{Data Preprocessing}

\paragraph{Molecular Conformer Generation.}

We follow ET-Flow~\cite{hassan2024flow} and adopt the same RDKit-based featurization~\cite{riniker2015better}, as summarized in Table~\ref{tab:atomic_features}, for both equivariant and non-equivariant architectures. For edge construction, we combine global, radius-based edges with local edges derived from the molecular graph.

\begin{table}[ht]
\centering
\caption{Atomic features used as initial node embeddings.}
\label{tab:atomic_features}
\begin{tabular}{lll}
\toprule
\textbf{Name} & \textbf{Description} & \textbf{Range} \\
\midrule
chirality & Chirality Tag & \{unspecified, tetrahedral CW \& CCW, other\} \\
degree & Number of bonded neighbors & $\{x : 0 \leq x \leq 10, x \in \mathbb{Z}\}$ \\
charge & Formal charge of atom & $\{x : -5 \leq x \leq 5, x \in \mathbb{Z}\}$ \\
num\_H & Total Number of Hydrogens & $\{x : 0 \leq x \leq 8, x \in \mathbb{Z}\}$ \\
number\_radical\_e & Number of Radical Electrons & $\{x : 0 \leq x \leq 4, x \in \mathbb{Z}\}$ \\
hybridization & Hybridization type & \{sp, sp$^2$, sp$^3$, sp$^3$d, sp$^3$d$^2$, other\} \\
aromatic & Whether on an aromatic ring & \{True, False\} \\
in\_ring & Whether in a ring & \{True, False\} \\
\bottomrule
\end{tabular}
\end{table}

\paragraph{Transition State Prediction.}

We follow GoFlow~\cite{galustian2025goflow} and encode chemical reactions using the condensed graph of reaction (CGR)~\cite{heid2021machine}. The CGR represents a reaction as a single graph obtained by superimposing the molecular graphs of reactants and products. Each atom and bond is assigned dual labels corresponding to its states before and after the reaction, enabling the model to capture changes in bonding, charge, hybridization, and related properties. Initial node and edge features for reactants and products are computed using the RDKit-based featurization described in Table~\ref{tab:atomic_features}. The edge list is constructed by combining global, radius-based edges with local edges derived from the reactant and product molecular graphs.

\subsection{Evaluation Metrics}

\paragraph{Molecular Conformer Generation.}

In the test set, for each molecule with \(L\) reference conformers, we sample \(K = 2L\) conformers and assess their quality using standard evaluation metrics.

A conformer \(C\) encodes the 3D coordinates of all atoms in a molecular graph, and can be represented as a set of vectors in \(\mathbb{R}^{3n}\). Prior work commonly evaluates conformer generation using Average Minimum RMSD (AMR) and Coverage (COV) in both Precision (P) and Recall (R) settings. For a given molecule, let:
\begin{itemize}
    \item \(\{C_l^*\}_{l=1}^L\) denote the set of ground-truth conformers,
    \item \(\{C_k\}_{k=1}^K\) denote the generated conformers, with \(K = 2L\),
    \item \(\delta\) be a predefined RMSD threshold for matching conformers.
\end{itemize}

\textbf{COV-P}: the fraction of generated conformers that match at least one ground-truth conformer within the threshold:
\[
\text{COV-P} = \frac{1}{K} \left| \left\{ k \in [1, K] \,\middle|\, \exists l \in [1, L], \ \text{RMSD}(C_k, C_l^*) < \delta \right\} \right|
\]

\textbf{AMR-P}: the average minimum RMSD from each generated conformer to its closest ground-truth conformer:
\[
\text{AMR-P} = \frac{1}{K} \sum_{k=1}^{K} \min_{l \in [1,L]} \text{RMSD}(C_k, C_l^*)
\]

\textbf{COV-R}: the fraction of ground-truth conformers that are matched by at least one generated conformer:
\[
\text{COV-R} = \frac{1}{L} \left| \left\{ l \in [1, L] \,\middle|\, \exists k \in [1, K], \ \text{RMSD}(C_k, C_l^*) < \delta \right\} \right|
\]

\textbf{AMR-R}: the average minimum RMSD from each ground-truth conformer to its closest generated conformer:
\[
\text{AMR-R} = \frac{1}{L} \sum_{l=1}^{L} \min_{k \in [1,K]} \text{RMSD}(C_l^*, C_k)
\]

\paragraph{Transition State Generation.}

For the TS task, we adopt the sampling scheme of GoFlow~\cite{galustian2025goflow}. Specifically, we generate 25 samples per reaction and select the one closest to the median of the predicted set as the final prediction. Then we compute the RMSD by first aligning the molecules $\mathbf{x}_1$ and $\mathbf{x}_2$ using the Kabsch algorithm and then computing:
\begin{equation}\label{eq:rmsd_definition}
\begin{split}
    \text{RMSD}(\mathbf{x}_1,\mathbf{x}_2) ={}& 
    \sqrt{\frac{\sum_{i=1}^{N} \|\mathbf{x}_{1,i} - \mathbf{x}_{2,i} \|^2}{N}} \\
    ={}& \sqrt{\frac{\sum_{i=1}^{N} \sum_{j \in \{x,y,z\}} (x_{1,i,j} - x_{2,i,j})^2}{N}}
\end{split}
\end{equation}
with $N$ denoting the number of atoms and the index $i$ referring to the $i$-th atom, whose Cartesian coordinates (x,y,z) are included in the summation. To capture more fine-grained geometric deviations, we introduce the bond distance mean absolute error (DMAE), defined as:
\begin{equation}\label{eq:dmae_definition}
    \text{DMAE}(\mathbf{x}_1,\mathbf{x}_2) = \frac{\sum_{i\not= j} |d_{1,ij} - d_{2,ij}|}{N(N-1)}
\end{equation}
with $d_{1,ij}=\|\mathbf{x}_{1,i} - \mathbf{x}_{1,j} \|^2$ as the interatomic distance between atom $i$ and $j$.

\subsection{Chirality Correction}

We follow ETFlow to perform a simple post-hoc chirality correction on generated molecular conformations. After sampling, we compare the orientation of each tetrahedral center in the generated structure with the required chirality specified by RDKit tags. If a mismatch is detected, we correct it by flipping the corresponding configuration, ensuring consistency of the stereochemistry without modifying the model itself.

We compute the oriented volume for each tetrahedral center as
\begin{equation}
\mathrm{OV}(p_1,p_2,p_3,p_4)
=
\mathrm{sign}
\left(
\begin{vmatrix}
1 & 1 & 1 & 1 \\
x_1 & x_2 & x_3 & x_4 \\
y_1 & y_2 & y_3 & y_4 \\
z_1 & z_2 & z_3 & z_4
\end{vmatrix}
\right).
\end{equation}

Let $\mathrm{OV}_{\text{gen}}$ denote the oriented volume of the generated conformation and $\mathrm{OV}_{\text{ref}}$ the target chirality from RDKit. If
\[
\mathrm{OV}_{\text{gen}} \neq \mathrm{OV}_{\text{ref}},
\]
we apply a reflection (e.g., flipping the configuration along the $z$-axis) to obtain the corrected conformation $\hat{x}$:
\[
\hat{x} = \mathcal{F}(x),
\]
where $\mathcal{F}$ denotes the chirality correction operation.

\subsection{Training Details and Hyperparameters}
\label{sec:train_details}

\paragraph{Drifting hyperparameters.}

When computing the drifting field, we follow~\citet{deng2026generative} and use three temperature values, $\tau \in \{0.02, 0.05, 0.2\}$. Smaller temperatures yield sharper, more localized interactions, while larger temperatures help prevent mode collapse. When combining the drifting fields across temperatures, each component $V_{p,q_{\theta}}^\tau$ is normalized to avoid dominance of low-temperature contributions. During training, we sample $N^{-} = 64$ negative samples per class, unless stated otherwise. The number of positive samples $N^{+}$ depends on the task: for conformer generation, up to 30 samples are used, whereas for transition state prediction, $N^{+} = 1$. The number of sampled classes is adjusted based on the model architecture to satisfy memory constraints. Additionally, we employ a two-sided normalization of the kernel. Specifically, the softmax is applied independently over $\mathbf{y}$ and $\mathbf{x}$, and the resulting normalized kernels are combined via their geometric mean:
\begin{equation}
\tilde{k}(\mathbf{x},\mathbf{y}) = \sqrt{\tilde{k}(\mathbf{x},\mathbf{y})_{\mathbf{x}} , \tilde{k}(\mathbf{x},\mathbf{y})_{\mathbf{y}} }.
\end{equation}
This kernel is subsequently re-normalized over all $\mathbf{y}$ for each fixed $\mathbf{x}$, following~\citet{deng2026generative}.

\paragraph{Molecular conformation generation.}

When using the ET-Flow~\cite{hassan2024flow}, we adopt the same architecture together with the hyperparameters originally proposed by the authors. For GEOM-QM9, we train ET-Flow for a fixed 200 epochs with a batch size of $N_\mathrm{c}=8$, using the full dataset per epoch on 2 A100 GPUs. We use the AdamW optimizer with no weight decay and employ a cosine annealing learning rate schedule, decaying from a maximum of $8 \cdot 10^{-4}$ to a minimum of $10^{-7}$ over 100 epochs; thereafter, the maximum learning rate is reduced by a factor of 0.05 for the remaining training. \withgeom{For GEOM-DRUGS, we train ET-Flow for 250 epochs with a batch size of $N_\mathrm{c}=12$ with $N^{-}=16$ per GPU on 2 H200 GPUs. We again use a cosine annealing learning rate schedule, decaying from a maximum of $10^{-3}$ to a minimum of $10^{-7}$ over the full training duration, together with a weight decay of $10^{-10}$. For inference, we select the final model checkpoint.}

When using the DiTMC architecture~\cite{frank2025sampling}, we reimplemented the original JAX-based model in PyTorch, resulting in minor modifications. In particular, we removed the graph geodesic embedding and employed the multi-head attention module from PyTorch Geometric. To obtain a fully non-equivariant architecture, we incorporate \textbf{a}bsolute positional, atom \textbf{i}ndex, and \textbf{r}elative distance embeddings (airPE), thereby explicitly breaking permutation, rotational, and translational equivariance. Apart from these changes, we follow the hyperparameter settings reported in the original work. We train \method using the AdamW optimizer with a weight decay of $0.01$. For GEOM-QM9, we employ a learning rate schedule with a maximum of $3 \cdot 10^{-4}$ with a batch size of $N_\mathrm{c}=36$ on 1 H200 GPU. We linearly warm up the learning rate from an initial value of $10^{-5}$ to the respective maximum over the first 1\% of training steps. Subsequently, we apply a cosine decay schedule, reducing the learning rate to a minimum of $0$ for GEOM-QM9. \withgeom{For GEOM-DRUGS the maximum learning rate is set to $1 \cdot 10^{-4}$ and the minimum to $1 \cdot 10^{-5}$ with a batch size of $N_\mathrm{c}=12$ and $N^{-}=32$ per GPU using 2 H200 GPUs.}

\paragraph{Transition state generation.}

When using GoFlow~\cite{galustian2025goflow}, we adopt the original model architecture but tune the hyperparameters, scaling the base (B) model to a larger (L) variant. We train with a batch size of $N_\mathrm{c}=4$ on a single A100 GPU using the AdamW optimizer with a weight decay of $1.24 \cdot 10^{-6}$ and a constant learning rate of $2.7 \cdot 10^{-5}$. The model is trained for 650 epochs, and we use the final checkpoint for inference.

Table~\ref{tab:hyperparameters_etflow} summarizes the relevant architectural hyperparameters used.

\begin{table}[ht]
\centering
\caption{Hyperparameters for ET-Flow, DiTMC-airPE, and GoFlow}
\label{tab:hyperparameters_etflow}
\begin{tabular}{lllll}
\toprule
Hyperparameter & ET-Flow & DiTMC+airPE (B) & GoFlow (B) & GoFlow (L)\\
\midrule
num\_layers & 20 & 6 & 3 & 12 \\
hidden\_channels & 160 & 384 & 256 & 256\\
num\_heads & 8 & 8 & 8 & 8 \\
cutoff & 11.0 & 11.0 & 11.0 & 11.0 \\
node\_attr\_dim & 10 & 10 & 10 & 10\\
edge\_attr\_dim & 1 & 1 & 1 & 1\\
num\_layers\_mgn & - & 2 & - & -\\
\midrule
\# param & 8.3M & 8.9M & 5.0M & 18.7M \\
\bottomrule
\end{tabular}
\end{table}


\section{Training and Sampling algorithm}

Algorithm~\ref{alg:method_training} describes the training of \method in both settings, namely drifting in Cartesian space and in the $G$-invariant embedding space using an embedding function $\phi$. For both methods, the generated and data samples are mean-centered to guarantee translational invariance.

\begin{algorithm}[H]
\caption{Training of \method.}
\label{alg:method_training}
\begin{algorithmic}[1]
\Require (Untrained) generator $f_\theta$; embedding function $\phi$, data distribution $p(\,\cdot\mid c)$; prior $p_\epsilon$; kernel function $k$; mode; temperatures $\mathcal{T}$; $N_c, N^+, N^-$.
\While{not converged}
  \State Sample $N_c$ classes; for each $c$:
  \State \quad Sample $\{\mathbf{y}^+_j\}_{j=1}^{N^+} \sim p(\,\cdot \mid c)$,
                $\{\boldsymbol\epsilon_i\}_{i=1}^{N^-} \sim p_\epsilon$.
  \State \quad $\mathbf{x}_i \leftarrow f_\theta(\boldsymbol\epsilon_i \mid c)$,
                \quad $\mathbf{y}^-_k \leftarrow \mathrm{stopgrad}(\mathbf{x}_k)$.
  \State $\mathbf{x}_i \leftarrow \mathbf{x}_i - \tfrac{1}{N}\sum_{a=1}^{N}\mathbf{x}_{i,a}$
  \If{mode $=\textsc{cartesian}$}
    \State $\mathbf{u}_i \leftarrow \mathbf{x}_i$,\;
           $\mathbf{v}^\pm_{i\ell} \leftarrow g^*(\mathbf{x}_i, \mathbf{y}^\pm_\ell) \cdot \mathbf{y}^\pm_\ell$.
  \Else
    \State $\mathbf{u}_i \leftarrow \phi(\mathbf{x}_i)$,\;
           $\mathbf{v}^\pm_{i\ell} \leftarrow \phi(\mathbf{y}^\pm_\ell)$.
  \EndIf
  \State $\displaystyle \mathbf{V}_i \leftarrow
            \sum_{\tau\in\mathcal{T}}
            \Big[\sum_j \tilde k_\tau(\mathbf{u}_i, \mathbf{v}^+_{ij})(\mathbf{v}^+_{ij}-\mathbf{u}_i)
                - \sum_k \tilde k_\tau(\mathbf{u}_i, \mathbf{v}^-_{ik})(\mathbf{v}^-_{ik}-\mathbf{u}_i)\Big]$.
  \State $\mathcal{L} \leftarrow \frac{1}{N_c N^-}\sum_{c,i}
              \big\|\mathbf{u}_i - \mathrm{stopgrad}(\mathbf{u}_i + \mathbf{V}_i)\big\|^2$.
  \State $\theta \leftarrow \theta - \eta\,\nabla_\theta \mathcal{L}$.
\EndWhile
\end{algorithmic}
\end{algorithm}

Algorithm~\ref{alg:method_sampling} describes the one-shot inference procedure of \method. For the conformer generation task, we sample $2K$ conformers, where $K$ denotes the number of ground-truth conformers for a given class $c$. For the TS task, we adopt the sampling scheme of GoFlow~\cite{galustian2025goflow}. Specifically, we generate 25 samples and select the one closest to the median of the predicted set as the final prediction.

\begin{algorithm}[H]
\caption{One-shot sampling with \method.}
\label{alg:method_sampling}
\begin{algorithmic}[1]
\Require Generator $f_\theta$; prior $p_\epsilon$; class $c$; number of samples $K$.
\State Sample $\boldsymbol\epsilon_1,\dots,\boldsymbol\epsilon_K \sim p_\epsilon$;\;
       $\mathbf{x}_i \leftarrow f_\theta(\boldsymbol\epsilon_i \mid c)$.
\State \Return $\{\mathbf{x}_i\}_{i=1}^K$ \,(conformer task)\, or
       $\mathbf{x}_{i^*}$ with $i^* = \arg\min_i \|\mathbf{x}_i - \mathrm{median}_j\,\mathbf{x}_j\|$ \,(TS task).
\end{algorithmic}
\end{algorithm}


\section{Additional experiments}
\label{sec:add_experiments}


\paragraph{Drifting hyperparameters.}

To better understand the hyperparameters introduced by drift-based training, we conduct a series of ablation studies. First, we analyze the effect of the number of negative samples $N^{-}$ generated per molecular graph $N_\text{c}$ during training. Table~\ref{tab:ablation_drifting_n_neg_geom_qm9} compares different settings using an equivariant ET-flow architecture. For each molecular graph, up to 30 conformers are used as positive samples, noting that some graphs contain fewer conformers. The effective batch size, given by $N^{-} \times N_\text{c}$, is kept constant across all experiments to ensure a fair comparison. We observe that generation quality consistently improves as the number of negative samples increases. This effect is particularly pronounced in precision metrics, indicating that increasing $N^{-}$ leads to higher-quality generated conformers during training.

\begin{table}[ht]
\centering
\caption{Ablation of number of negative samples for molecule conformer generation on GEOM-QM9 ($\delta = 0.5,\text{\AA}$). “-R” denotes Recall, and “-P” denotes Precision. All drift models are trained with single-sided normalization.}
\label{tab:ablation_drifting_n_neg_geom_qm9}
\begin{adjustbox}{max width=\linewidth}
\begin{tabular}{lcccccccccc}
\toprule
& & Inference & \multicolumn{2}{c}{Coverage-R (\%) $\uparrow$} & \multicolumn{2}{c}{AMR-R (\AA) $\downarrow$} 
& \multicolumn{2}{c}{Coverage-P (\%) $\uparrow$} & \multicolumn{2}{c}{AMR-P  (\AA) $\downarrow$} \\
\cmidrule(lr){4-5} \cmidrule(lr){6-7} \cmidrule(lr){8-9} \cmidrule(lr){10-11}
Method & $N_\text{c}$ & $N_\text{neg}$ & Mean & Median & Mean & Median & Mean & Median & Mean & Median \\
\midrule
\method$_\text{ET-Flow}$ & 128 & 8 & 94.9 & 100.0 & 0.140 & 0.089 & 87.7 & 100.0 & 0.205 & 0.148 \\
\method$_\text{ET-Flow}$ & 64 & 16 & 94.8 & 100.0 & 0.132 & 0.083 & 88.7 & 100.0 & 0.189 & 0.128 \\
\method$_\text{ET-Flow}$ & 32 & 32 & 95.6 & 100.0 & 0.111 & 0.066 & 91.7 & 100.0 & 0.147 & 0.097 \\
\method$_\text{ET-Flow}$ & 16 & 64 & 95.7 & 100.0 & 0.123 & 0.062 & 91.4 & 100.0 & 0.147 & 0.093 \\
\bottomrule
\end{tabular}
\end{adjustbox}
\end{table}

Additionally, we investigate two kernel normalization strategies for $k$. The first is a “one-sided” normalization, where kernel weights are normalized independently over the $\mathbf{y}$ samples for each generated sample $\mathbf{x}$. In the “two-sided” scheme, kernel values are first normalized over all $\mathbf{x}$ samples for each $\mathbf{y}_i$, and the final kernel is obtained as the geometric mean of the normalized contributions from the $\mathbf{x}$- and $\mathbf{y}$-side. Table~\ref{tab:ablation_drifting_norm_geom_qm9} compares both strategies for the non-equivariant DiTMC as well as the equivariant ET-Flow architecture. We observe that the two-sided normalization yields a consistent, albeit modest, improvement in overall generation quality across both architectures.

\begin{table}[ht]
\centering
\caption{Ablation of kernel normalization strategies for molecule conformer generation on GEOM-QM9 ($\delta = 0.5,\text{\AA}$). “-R” denotes Recall, and “-P” denotes Precision.}
\label{tab:ablation_drifting_norm_geom_qm9}
\begin{adjustbox}{max width=\linewidth}
\begin{tabular}{lccccccccc}
\toprule
&  & \multicolumn{2}{c}{Coverage-R (\%) $\uparrow$} & \multicolumn{2}{c}{AMR-R (\AA) $\downarrow$} 
& \multicolumn{2}{c}{Coverage-P (\%) $\uparrow$} & \multicolumn{2}{c}{AMR-P  (\AA) $\downarrow$} \\
\cmidrule(lr){3-4} \cmidrule(lr){5-6} \cmidrule(lr){7-8} \cmidrule(lr){8-10}
Method & Drift normalization & Mean & Median & Mean & Median & Mean & Median & Mean & Median \\
\midrule
\method$_\text{ET-Flow}$ & one-sided & 95.7 & 100.0 & 0.123 & 0.062 & 91.4 & 100.0 & 0.147 & 0.093 \\
\method$_\text{ET-Flow}$ & two-sided & 95.7 & 100.0 & 0.118 & 0.059 & 92.0 & 100.0 & 0.141 & 0.085 \\
\method$_\text{DiTMC+airPE}$ & one-sided & 95.5 & 100.0 & 0.109 & 0.061 & 92.9 & 100.0 & 0.133 & 0.078 \\
\method$_\text{DiTMC+airPE}$ & two-sided & 95.6 & 100.0 & 0.106 & 0.058 & 93.7 & 100.0 & 0.124 & 0.067 \\
\bottomrule
\end{tabular}
\end{adjustbox}
\end{table}

\paragraph{Expressivity requirements of one-shot models.}

As drift-based generation is a one-shot process, the model must be substantially more expressive than in flow matching, which can be evaluated iteratively. This reduces the required Lipschitz constant per step and, consequently, the expressivity demands on the model. To study this effect, Table~\ref{tab:ablation_same_model_rdb7} compares drifting using the same GoFlow architecture (5M parameters) as the single-step flow matching baseline, as well as an increased GoFlow architecture (18.7M parameters). We observe a severe degradation in performance for GoFlow under the one-step flow matching setting, rendering the predictions largely unusable, whereas drifting still achieves reasonable results. This can be attributed to the fact that a one-shot generator must implement a significantly more expressive mapping, effectively requiring a higher Lipschitz constant, while flow matching distributes the transformation across multiple steps, thereby reducing per-step complexity. Nevertheless, even when using a larger model, a single forward pass remains faster than iterative evaluation of a smaller model.

\begin{table*}[ht]
\centering
\caption{Ablation on model size for the TS task on the RDB7 dataset.}
\vspace{0.5em}
\label{tab:ablation_same_model_rdb7}
\begin{adjustbox}{max width=\linewidth}
\begin{tabular}{@{\extracolsep\fill}lllllllll}
\toprule
Method & NFE & Inference &\multicolumn{2}{c}{RMSD (\AA)} & \multicolumn{2}{c}{DMAE (\AA)} \\
\cmidrule(lr){4-5} \cmidrule(lr){6-7}
& & time (ms) & Mean & Median & Mean & Median \\
\midrule
GoFlow (B) & 1 & 0.7 & 1.478 & 1.479 & 1.169 & 1.151 \\
\method$_\text{GoFlow (B)}$ & 1 & 0.7 & 0.480 & 0.437 & 0.209 & 0.170 \\
\method$_\text{GoFlow (L)}$ & 1 & 1.3 & 0.328 & 0.239 & 0.134 & 0.092 \\
\bottomrule
\end{tabular}
\end{adjustbox}
\end{table*}

\paragraph{Different realizations of \method.}
\label{sec:different_realizations}

Table~\ref{tab:ablation_full_geom_qm9} extends Figure~\ref{fig:ablation_drift_realizations} by reporting the numerical results for each \method realization on the GEOM-QM9 dataset. In the iterative setting, an initial rotation was applied prior to the iterative permutation and rotational alignment. 

Because the non-equivariant baseline lacks architectural symmetry constraints, it remains flexible enough to fit the unsymmetrized empirical data distribution $p$. In contrast, the rigidly constrained equivariant architecture struggles to learn this distribution without relying on data augmentation. By introducing a symmetrized drift, we eliminate the structural ambiguities present in the raw data. This progressively enforces symmetry and provides a significantly cleaner learning signal for both architectures.

\begin{table}[ht]
\centering
\caption{Ablation of different realizations of \method for molecule conformer generation results on GEOM-QM9 ($\delta$= 0.5\AA). “-R” denotes Recall and “-P” denotes Precision. All drift models are trained with single-sided normalization.}
\label{tab:ablation_full_geom_qm9}
\begin{adjustbox}{max width=\linewidth}
\begin{tabular}{lcccccccc}
\toprule
& \multicolumn{2}{c}{Coverage-R (\%) $\uparrow$} & \multicolumn{2}{c}{AMR-R (\AA) $\downarrow$} 
& \multicolumn{2}{c}{Coverage-P (\%) $\uparrow$} & \multicolumn{2}{c}{AMR-P  (\AA) $\downarrow$} \\
\cmidrule(lr){2-3} \cmidrule(lr){4-5} \cmidrule(lr){6-7} \cmidrule(lr){8-9}
Method & Mean & Median & Mean & Median & Mean & Median & Mean & Median \\
\midrule
BaseDrift$_\text{ET-Flow, $n_\text{aug}=0$}$ & 0.0 & 0.0 & 1.210 & 1.101 & 0.0 & 0.0 & 1.317 & 1.238 \\
\method$_\text{ET-Flow, rot. align}$ & 49.2 & 50.0 & 0.463 & 0.495 & 41.8 & 37.5 & 0.518 & 0.551 \\
\method$_\text{ET-Flow, iter. align}$ & 92.3 & 100.0 & 0.232 & 0.227 & 83.9 & 96.0 & 0.290 & 0.299 \\
\method$_\text{ET-Flow, embedded no perm.}$ & 94.5 & 100.0 & 0.181 & 0.150 & 87.1 & 98.1 & 0.250 & 0.232 \\
\method$_\text{ET-Flow, embedded}$ & 95.7 & 100.0 & 0.123 & 0.062 & 91.4 & 100.0 & 0.147 & 0.093 \\
\midrule
BaseDrift$_\text{DiTMC+airPE, $n_\text{aug}=0$}$ & 73.0 & 90.2 & 0.352 & 0.338 & 82.9 & 100.0 & 0.291 & 0.242 \\
\method$_\text{DiTMC+airPE, rot. align}$ & 88.9 & 100.0 & 0.193 & 0.145 & 93.2 & 100.0 & 0.157 & 0.094 \\
\method$_\text{DiTMC+airPE, iter. align}$ & 91.2 & 100.0 & 0.222 & 0.178 & 85.5 & 100.0 & 0.257 & 0.201 \\
\method$_\text{DiTMC+airPE, embedded no perm.}$ & 96.0 & 100.0 & 0.117 & 0.071 & 94.1 & 100.0 & 0.137 & 0.085 \\
\method$_\text{DiTMC+airPE, embedded}$ & 95.5 & 100.0 & 0.109 & 0.061 & 92.9 & 100.0 & 0.133 & 0.078 \\
\bottomrule
\end{tabular}
\end{adjustbox}
\end{table}

To compare the exact global solution of the alignment problem in Equation~\ref{eq:sym_aware_diff} with iterative and approximate alternatives, we construct a subset of GEOM-QM9 in which no atom type occurs more than three times. This restriction makes brute-force enumeration feasible, yielding 416 training and 50 test molecular graphs. Table~\ref{tab:ablation_feasible_geom_qm9} shows that, on this subset, the brute-force solution achieves the best performance, closely followed by the iterative approximation. Notably, the performance of the iterative method depends significantly on the initialization strategy: starting with a rotational alignment before applying the Hungarian matching leads to substantially different results. We further observe that, due to the limited permutational freedom in this subset, the benefits of enforcing permutation equivariance are less pronounced than on the full GEOM-QM9 dataset. This is reflected in the small performance gap between the iterative approach and the purely rotationally aligned variant of the drift.

Additionally, we investigate variants of the base drift model in combination with data augmentation. For each positive sample $\mathbf{y}^{+}$, we generate $n_\text{aug}$ augmented targets by applying rotations and permutations consistent with graph automorphisms. We find that data augmentation is essential for the equivariant architecture to learn the drift objective at all: without augmentation, the model fails entirely, whereas with $n_\text{aug}=10$ it begins to capture the signal. Increasing the number of augmentations further improves performance, although the results still lag behind those obtained with \method. This improvement, however, comes at a significant computational cost. Using $n_\text{aug}=10000$ increases the training time from 45 minutes (no augmentation) to approximately 22 hours on a single A100 GPU. For comparison, training with iterative alignment requires about 2 hours, while brute-force alignment takes roughly 16 hours.

\begin{table}[ht]
\centering
\caption{Ablation of alignment strategies for molecular conformer generation on a reduced GEOM-QM9 subset ($\delta = 0.5,\text{\AA}$), for which brute-force permutation matching is computationally feasible. “-R” denotes Recall and “-P” denotes Precision. All drift models are trained with single-sided normalization.}
\label{tab:ablation_feasible_geom_qm9}
\begin{adjustbox}{max width=\linewidth}
\begin{tabular}{lcccccccc}
\toprule
& \multicolumn{2}{c}{Coverage-R (\%) $\uparrow$} & \multicolumn{2}{c}{AMR-R (\AA) $\downarrow$} 
& \multicolumn{2}{c}{Coverage-P (\%) $\uparrow$} & \multicolumn{2}{c}{AMR-P  (\AA) $\downarrow$} \\
\cmidrule(lr){2-3} \cmidrule(lr){4-5} \cmidrule(lr){6-7} \cmidrule(lr){8-9}
Method & Mean & Median & Mean & Median & Mean & Median & Mean & Median \\
\midrule
BaseDrift$_\text{ET-Flow, $n_\text{aug}=0$}$ & 2.0 & 0.0 & 1.292 & 1.308 & 2.0 & 0.0 & 1.369 & 1.374 \\
BaseDrift$_\text{ET-Flow, $n_\text{aug}=10$}$ & 67.2 & 100.0 & 0.443 & 0.454 & 50.0 & 50.0 & 0.509 & 0.499 \\
BaseDrift$_\text{ET-Flow, $n_\text{aug}=100$}$ & 79.0 & 100.0 & 0.325 & 0.259 & 71.6 & 100.0 & 0.383 & 0.351 \\
BaseDrift$_\text{ET-Flow, $n_\text{aug}=1000$}$ & 68.3 & 100.0 & 0.397 & 0.348 & 59.9 & 87.5 & 0.470 & 0.422 \\
BaseDrift$_\text{ET-Flow, $n_\text{aug}=10000$}$ & 69.7 & 100.0 & 0.411 & 0.327 & 60.3 & 72.5 & 0.484 & 0.443\\
\method$_\text{ET-Flow, rot. align}$ & 91.7 & 100.0 & 0.141 & 0.047 & 92.0 & 100.0 & 0.140 & 0.055 \\
\method$_\text{ET-Flow, iter. align, perm. first}$ & 88.2 & 100.0 & 0.290 & 0.303 & 85.8 & 100.0 & 0.322 & 0.304 \\
\method$_\text{ET-Flow, iter. align, rot. first}$ & 92.6 & 100.0 & 0.130 & 0.044 & 94.7 & 100.0 & 0.126 & 0.047 \\
\method$_\text{ET-Flow, brute force align}$ & 95.8 & 100.0 & 0.093 & 0.036 & 96.6 & 100.0 & 0.092 & 0.040 \\
\bottomrule
\end{tabular}
\end{adjustbox}
\end{table}

\clearpage
\section*{NeurIPS Paper Checklist}

\begin{enumerate}

\item {\bf Claims}
    \item[] Question: Do the main claims made in the abstract and introduction accurately reflect the paper's contributions and scope?
    \item[] Answer: \answerYes{} 
    \item[] Justification: The paper claims that equivariant models trained on empirical, non-invariant datasets do not necessarily recover the corresponding symmetrized data distribution. To address this limitation, we introduce two methods and show state-of-the-art results on conformer and transition state generation benchmarks.
    \item[] Guidelines:
    \begin{itemize}
        \item The answer \answerNA{} means that the abstract and introduction do not include the claims made in the paper.
        \item The abstract and/or introduction should clearly state the claims made, including the contributions made in the paper and important assumptions and limitations. A \answerNo{} or \answerNA{} answer to this question will not be perceived well by the reviewers. 
        \item The claims made should match theoretical and experimental results, and reflect how much the results can be expected to generalize to other settings. 
        \item It is fine to include aspirational goals as motivation as long as it is clear that these goals are not attained by the paper. 
    \end{itemize}

\item {\bf Limitations}
    \item[] Question: Does the paper discuss the limitations of the work performed by the authors?
    \item[] Answer: \answerYes{} 
    \item[] Justification: Limitations are discussed in Section~\ref{sec:discussion}.
    \item[] Guidelines:
    \begin{itemize}
        \item The answer \answerNA{} means that the paper has no limitation while the answer \answerNo{} means that the paper has limitations, but those are not discussed in the paper. 
        \item The authors are encouraged to create a separate ``Limitations'' section in their paper.
        \item The paper should point out any strong assumptions and how robust the results are to violations of these assumptions (e.g., independence assumptions, noiseless settings, model well-specification, asymptotic approximations only holding locally). The authors should reflect on how these assumptions might be violated in practice and what the implications would be.
        \item The authors should reflect on the scope of the claims made, e.g., if the approach was only tested on a few datasets or with a few runs. In general, empirical results often depend on implicit assumptions, which should be articulated.
        \item The authors should reflect on the factors that influence the performance of the approach. For example, a facial recognition algorithm may perform poorly when image resolution is low or images are taken in low lighting. Or a speech-to-text system might not be used reliably to provide closed captions for online lectures because it fails to handle technical jargon.
        \item The authors should discuss the computational efficiency of the proposed algorithms and how they scale with dataset size.
        \item If applicable, the authors should discuss possible limitations of their approach to address problems of privacy and fairness.
        \item While the authors might fear that complete honesty about limitations might be used by reviewers as grounds for rejection, a worse outcome might be that reviewers discover limitations that aren't acknowledged in the paper. The authors should use their best judgment and recognize that individual actions in favor of transparency play an important role in developing norms that preserve the integrity of the community. Reviewers will be specifically instructed to not penalize honesty concerning limitations.
    \end{itemize}

\item {\bf Theory assumptions and proofs}
    \item[] Question: For each theoretical result, does the paper provide the full set of assumptions and a complete (and correct) proof?
    \item[] Answer: \answerYes{} 
    \item[] Justification: Yes assumptions and proofs are provided in Section~\ref{sec:proofs}
    \item[] Guidelines:
    \begin{itemize}
        \item The answer \answerNA{} means that the paper does not include theoretical results. 
        \item All the theorems, formulas, and proofs in the paper should be numbered and cross-referenced.
        \item All assumptions should be clearly stated or referenced in the statement of any theorems.
        \item The proofs can either appear in the main paper or the supplemental material, but if they appear in the supplemental material, the authors are encouraged to provide a short proof sketch to provide intuition. 
        \item Inversely, any informal proof provided in the core of the paper should be complemented by formal proofs provided in appendix or supplemental material.
        \item Theorems and Lemmas that the proof relies upon should be properly referenced. 
    \end{itemize}

    \item {\bf Experimental result reproducibility}
    \item[] Question: Does the paper fully disclose all the information needed to reproduce the main experimental results of the paper to the extent that it affects the main claims and/or conclusions of the paper (regardless of whether the code and data are provided or not)?
    \item[] Answer: \answerYes{} 
    \item[] Justification: Yes, the experimental detail can be found in Section~\ref{sec:experiments}.
    \item[] Guidelines:
    \begin{itemize}
        \item The answer \answerNA{} means that the paper does not include experiments.
        \item If the paper includes experiments, a \answerNo{} answer to this question will not be perceived well by the reviewers: Making the paper reproducible is important, regardless of whether the code and data are provided or not.
        \item If the contribution is a dataset and\slash or model, the authors should describe the steps taken to make their results reproducible or verifiable. 
        \item Depending on the contribution, reproducibility can be accomplished in various ways. For example, if the contribution is a novel architecture, describing the architecture fully might suffice, or if the contribution is a specific model and empirical evaluation, it may be necessary to either make it possible for others to replicate the model with the same dataset, or provide access to the model. In general. releasing code and data is often one good way to accomplish this, but reproducibility can also be provided via detailed instructions for how to replicate the results, access to a hosted model (e.g., in the case of a large language model), releasing of a model checkpoint, or other means that are appropriate to the research performed.
        \item While NeurIPS does not require releasing code, the conference does require all submissions to provide some reasonable avenue for reproducibility, which may depend on the nature of the contribution. For example
        \begin{enumerate}
            \item If the contribution is primarily a new algorithm, the paper should make it clear how to reproduce that algorithm.
            \item If the contribution is primarily a new model architecture, the paper should describe the architecture clearly and fully.
            \item If the contribution is a new model (e.g., a large language model), then there should either be a way to access this model for reproducing the results or a way to reproduce the model (e.g., with an open-source dataset or instructions for how to construct the dataset).
            \item We recognize that reproducibility may be tricky in some cases, in which case authors are welcome to describe the particular way they provide for reproducibility. In the case of closed-source models, it may be that access to the model is limited in some way (e.g., to registered users), but it should be possible for other researchers to have some path to reproducing or verifying the results.
        \end{enumerate}
    \end{itemize}

\item {\bf Open access to data and code}
    \item[] Question: Does the paper provide open access to the data and code, with sufficient instructions to faithfully reproduce the main experimental results, as described in supplemental material?
    \item[] Answer: \answerYes{} 
    \item[] Justification: We provide links to our our code and data.
    \item[] Guidelines:
    \begin{itemize}
        \item The answer \answerNA{} means that paper does not include experiments requiring code.
        \item Please see the NeurIPS code and data submission guidelines (\url{https://neurips.cc/public/guides/CodeSubmissionPolicy}) for more details.
        \item While we encourage the release of code and data, we understand that this might not be possible, so \answerNo{} is an acceptable answer. Papers cannot be rejected simply for not including code, unless this is central to the contribution (e.g., for a new open-source benchmark).
        \item The instructions should contain the exact command and environment needed to run to reproduce the results. See the NeurIPS code and data submission guidelines (\url{https://neurips.cc/public/guides/CodeSubmissionPolicy}) for more details.
        \item The authors should provide instructions on data access and preparation, including how to access the raw data, preprocessed data, intermediate data, and generated data, etc.
        \item The authors should provide scripts to reproduce all experimental results for the new proposed method and baselines. If only a subset of experiments are reproducible, they should state which ones are omitted from the script and why.
        \item At submission time, to preserve anonymity, the authors should release anonymized versions (if applicable).
        \item Providing as much information as possible in supplemental material (appended to the paper) is recommended, but including URLs to data and code is permitted.
    \end{itemize}

\item {\bf Experimental setting/details}
    \item[] Question: Does the paper specify all the training and test details (e.g., data splits, hyperparameters, how they were chosen, type of optimizer) necessary to understand the results?
    \item[] Answer: \answerYes{} 
    \item[] Justification: Yes, data splits are provided in Section~\ref{sec:experiments}, while all implementation details are reported in Section~\ref{sec:implementation}.
    \item[] Guidelines:
    \begin{itemize}
        \item The answer \answerNA{} means that the paper does not include experiments.
        \item The experimental setting should be presented in the core of the paper to a level of detail that is necessary to appreciate the results and make sense of them.
        \item The full details can be provided either with the code, in appendix, or as supplemental material.
    \end{itemize}

\item {\bf Experiment statistical significance}
    \item[] Question: Does the paper report error bars suitably and correctly defined or other appropriate information about the statistical significance of the experiments?
    \item[] Answer: \answerNo{} 
    \item[] Justification: We follow the same procedures as previous works~\cite{ganea2021geomol,hassan2024flow,frank2025sampling}.
    \item[] Guidelines:
    \begin{itemize}
        \item The answer \answerNA{} means that the paper does not include experiments.
        \item The authors should answer \answerYes{} if the results are accompanied by error bars, confidence intervals, or statistical significance tests, at least for the experiments that support the main claims of the paper.
        \item The factors of variability that the error bars are capturing should be clearly stated (for example, train/test split, initialization, random drawing of some parameter, or overall run with given experimental conditions).
        \item The method for calculating the error bars should be explained (closed form formula, call to a library function, bootstrap, etc.)
        \item The assumptions made should be given (e.g., Normally distributed errors).
        \item It should be clear whether the error bar is the standard deviation or the standard error of the mean.
        \item It is OK to report 1-sigma error bars, but one should state it. The authors should preferably report a 2-sigma error bar than state that they have a 96\% CI, if the hypothesis of Normality of errors is not verified.
        \item For asymmetric distributions, the authors should be careful not to show in tables or figures symmetric error bars that would yield results that are out of range (e.g., negative error rates).
        \item If error bars are reported in tables or plots, the authors should explain in the text how they were calculated and reference the corresponding figures or tables in the text.
    \end{itemize}

\item {\bf Experiments compute resources}
    \item[] Question: For each experiment, does the paper provide sufficient information on the computer resources (type of compute workers, memory, time of execution) needed to reproduce the experiments?
    \item[] Answer: \answerYes{} 
    \item[] Justification: Yes, necessary details are provided in Section~\ref{sec:implementation}.
    \item[] Guidelines:
    \begin{itemize}
        \item The answer \answerNA{} means that the paper does not include experiments.
        \item The paper should indicate the type of compute workers CPU or GPU, internal cluster, or cloud provider, including relevant memory and storage.
        \item The paper should provide the amount of compute required for each of the individual experimental runs as well as estimate the total compute. 
        \item The paper should disclose whether the full research project required more compute than the experiments reported in the paper (e.g., preliminary or failed experiments that didn't make it into the paper). 
    \end{itemize}
    
\item {\bf Code of ethics}
    \item[] Question: Does the research conducted in the paper conform, in every respect, with the NeurIPS Code of Ethics \url{https://neurips.cc/public/EthicsGuidelines}?
    \item[] Answer: \answerYes{} 
    \item[] Justification: We follow the the NeurIPS Code of Ethics.
    \item[] Guidelines:
    \begin{itemize}
        \item The answer \answerNA{} means that the authors have not reviewed the NeurIPS Code of Ethics.
        \item If the authors answer \answerNo, they should explain the special circumstances that require a deviation from the Code of Ethics.
        \item The authors should make sure to preserve anonymity (e.g., if there is a special consideration due to laws or regulations in their jurisdiction).
    \end{itemize}

\item {\bf Broader impacts}
    \item[] Question: Does the paper discuss both potential positive societal impacts and negative societal impacts of the work performed?
    \item[] Answer: \answerNo{} 
    \item[] Justification: This work aims to advance the field of AI for Science. While research in this area may have broader societal implications, we are not aware of any specific societal consequences that require explicit discussion in the context of this work.
    \item[] Guidelines:
    \begin{itemize}
        \item The answer \answerNA{} means that there is no societal impact of the work performed.
        \item If the authors answer \answerNA{} or \answerNo, they should explain why their work has no societal impact or why the paper does not address societal impact.
        \item Examples of negative societal impacts include potential malicious or unintended uses (e.g., disinformation, generating fake profiles, surveillance), fairness considerations (e.g., deployment of technologies that could make decisions that unfairly impact specific groups), privacy considerations, and security considerations.
        \item The conference expects that many papers will be foundational research and not tied to particular applications, let alone deployments. However, if there is a direct path to any negative applications, the authors should point it out. For example, it is legitimate to point out that an improvement in the quality of generative models could be used to generate Deepfakes for disinformation. On the other hand, it is not needed to point out that a generic algorithm for optimizing neural networks could enable people to train models that generate Deepfakes faster.
        \item The authors should consider possible harms that could arise when the technology is being used as intended and functioning correctly, harms that could arise when the technology is being used as intended but gives incorrect results, and harms following from (intentional or unintentional) misuse of the technology.
        \item If there are negative societal impacts, the authors could also discuss possible mitigation strategies (e.g., gated release of models, providing defenses in addition to attacks, mechanisms for monitoring misuse, mechanisms to monitor how a system learns from feedback over time, improving the efficiency and accessibility of ML).
    \end{itemize}
    
\item {\bf Safeguards}
    \item[] Question: Does the paper describe safeguards that have been put in place for responsible release of data or models that have a high risk for misuse (e.g., pre-trained language models, image generators, or scraped datasets)?
    \item[] Answer: \answerNA{} 
    \item[] Justification: This paper presents research aimed at advancing the field of AI for Science. While potential risks of misuse exist, we are not aware of any that require specific safeguards.
    \item[] Guidelines:
    \begin{itemize}
        \item The answer \answerNA{} means that the paper poses no such risks.
        \item Released models that have a high risk for misuse or dual-use should be released with necessary safeguards to allow for controlled use of the model, for example by requiring that users adhere to usage guidelines or restrictions to access the model or implementing safety filters. 
        \item Datasets that have been scraped from the Internet could pose safety risks. The authors should describe how they avoided releasing unsafe images.
        \item We recognize that providing effective safeguards is challenging, and many papers do not require this, but we encourage authors to take this into account and make a best faith effort.
    \end{itemize}

\item {\bf Licenses for existing assets}
    \item[] Question: Are the creators or original owners of assets (e.g., code, data, models), used in the paper, properly credited and are the license and terms of use explicitly mentioned and properly respected?
    \item[] Answer: \answerYes{} 
    \item[] Justification: Yes, we cite all creators and original owners of work referenced in this manuscript.
    \item[] Guidelines:
    \begin{itemize}
        \item The answer \answerNA{} means that the paper does not use existing assets.
        \item The authors should cite the original paper that produced the code package or dataset.
        \item The authors should state which version of the asset is used and, if possible, include a URL.
        \item The name of the license (e.g., CC-BY 4.0) should be included for each asset.
        \item For scraped data from a particular source (e.g., website), the copyright and terms of service of that source should be provided.
        \item If assets are released, the license, copyright information, and terms of use in the package should be provided. For popular datasets, \url{paperswithcode.com/datasets} has curated licenses for some datasets. Their licensing guide can help determine the license of a dataset.
        \item For existing datasets that are re-packaged, both the original license and the license of the derived asset (if it has changed) should be provided.
        \item If this information is not available online, the authors are encouraged to reach out to the asset's creators.
    \end{itemize}

\item {\bf New assets}
    \item[] Question: Are new assets introduced in the paper well documented and is the documentation provided alongside the assets?
    \item[] Answer: \answerYes{} 
    \item[] Justification: We plan to release our code together with appropriate documentation. These materials are currently being prepared and will be made available in a future revision of the paper.
    \item[] Guidelines:
    \begin{itemize}
        \item The answer \answerNA{} means that the paper does not release new assets.
        \item Researchers should communicate the details of the dataset\slash code\slash model as part of their submissions via structured templates. This includes details about training, license, limitations, etc. 
        \item The paper should discuss whether and how consent was obtained from people whose asset is used.
        \item At submission time, remember to anonymize your assets (if applicable). You can either create an anonymized URL or include an anonymized zip file.
    \end{itemize}

\item {\bf Crowdsourcing and research with human subjects}
    \item[] Question: For crowdsourcing experiments and research with human subjects, does the paper include the full text of instructions given to participants and screenshots, if applicable, as well as details about compensation (if any)? 
    \item[] Answer: \answerNA{} 
    \item[] Justification: No crowdsourcing or research with human subjects is done in our work.
    \item[] Guidelines:
    \begin{itemize}
        \item The answer \answerNA{} means that the paper does not involve crowdsourcing nor research with human subjects.
        \item Including this information in the supplemental material is fine, but if the main contribution of the paper involves human subjects, then as much detail as possible should be included in the main paper. 
        \item According to the NeurIPS Code of Ethics, workers involved in data collection, curation, or other labor should be paid at least the minimum wage in the country of the data collector. 
    \end{itemize}

\item {\bf Institutional review board (IRB) approvals or equivalent for research with human subjects}
    \item[] Question: Does the paper describe potential risks incurred by study participants, whether such risks were disclosed to the subjects, and whether Institutional Review Board (IRB) approvals (or an equivalent approval/review based on the requirements of your country or institution) were obtained?
    \item[] Answer: \answerNA{} 
    \item[] Justification: No IRB Approvals or equivalent for research with human subjects are done in our work.
    \item[] Guidelines:
    \begin{itemize}
        \item The answer \answerNA{} means that the paper does not involve crowdsourcing nor research with human subjects.
        \item Depending on the country in which research is conducted, IRB approval (or equivalent) may be required for any human subjects research. If you obtained IRB approval, you should clearly state this in the paper. 
        \item We recognize that the procedures for this may vary significantly between institutions and locations, and we expect authors to adhere to the NeurIPS Code of Ethics and the guidelines for their institution. 
        \item For initial submissions, do not include any information that would break anonymity (if applicable), such as the institution conducting the review.
    \end{itemize}

\item {\bf Declaration of LLM usage}
    \item[] Question: Does the paper describe the usage of LLMs if it is an important, original, or non-standard component of the core methods in this research? Note that if the LLM is used only for writing, editing, or formatting purposes and does \emph{not} impact the core methodology, scientific rigor, or originality of the research, declaration is not required.
    \item[] Answer: \answerNA{} 
    \item[] Justification: The core methodology does not involve the use of LLMs. 
    \item[] Guidelines:
    \begin{itemize}
        \item The answer \answerNA{} means that the core method development in this research does not involve LLMs as any important, original, or non-standard components.
        \item Please refer to our LLM policy in the NeurIPS handbook for what should or should not be described.
    \end{itemize}

\end{enumerate}

\end{document}